\def\cl@chapter{}
\journalname{myjournal}
\begin{document}

\title{Root-finding Approaches for Computing Conformal Prediction Set
}


\author{Eugene Ndiaye    \and
    Ichiro Takeuchi 
}


\institute{Eugene Ndiaye \at
H. Milton Stewart School of Industrial and Systems Engineering,\newline
Georgia Institute of Technology, Atlanta, GA, USA \\
     \email{endiaye3@gatech.edu}      
      \and
      Ichiro Takeuchi \at
      Faculty of Engineering, Nagoya University / RIKEN AIP, Furo-cho, Nagoya, Japan\\
      \email{ichiro.takeuchi@mae.nagoya-u.ac.jp}
      \and
      Correspondence to: [Eugene Ndiaye]
}

\date{Received: date / Accepted: date}

\maketitle

\begin{abstract}
Conformal prediction constructs a confidence set for an unobserved response of a feature vector based on previous identically distributed and exchangeable observations of responses and features. It has a coverage guarantee at any nominal level without additional assumptions on their distribution. Its computation deplorably requires a refitting procedure for all replacement candidates of the target response. In regression settings, this corresponds to an infinite number of model fits. Apart from relatively simple estimators that can be written as pieces of linear function of the response, efficiently computing such sets is difficult, and is still considered as an open problem. We exploit the fact that, \emph{often}, conformal prediction sets are intervals whose boundaries can be efficiently approximated by classical root-finding algorithms. We investigate how this approach can overcome many limitations of formerly used strategies; we discuss its complexity and drawbacks.\looseness=-1
\keywords{Prediction Set \and Uncertainty Quantification \and Distribution-Free \and Inference \and Conformal Prediction \and Reliability}
\end{abstract}

\section{Introduction}
\label{sec:Introduction}
 
\citep{Gammerman_Vovk_Vapnik98, Vovk_Gammerman_Shafer05, Shafer_Vovk08} introduced Conformal Prediction (CP) as a general method for predicting a confidence set of a random variable from its point prediction. Given an observed data set $\Data_n = \{(x_1, y_1), \cdots, (x_n, y_{n})\}$ sampled from a distribution $\mathbb{P}$, it constructs a $100(1 - \alpha)\%$ confidence set that contains the unobserved response $y_{n+1}$ of a new instance $x_{n+1}$. In this way, it equips traditional statistical learning algorithms with a confidence value when predicting the response of a new test example.
The general idea is to learn a predictive model on the augmented database $\Data_{n+1}(z) = \Data_n \cup (x_{n+1}, z)$ where $z$ replaces the unknown response $y_{n+1}$. We can therefore define a prediction loss for each observation, and rank them. A candidate $z$ will be considered as conformal or typical if the rank of its loss is sufficiently small. The conformal prediction set will merely collect the most typical $z$ as a confidence set for $y_{n+1}$.
As long as the sequence $\{(x_i, y_i)\}_{i=1}^{n+1}$ is exchangeable \footnote{Their joint probability distribution is invariant \wrt permutation of the data.}, and the predictive model is invariant with respect to permutation of the data, this method benefits from a strong coverage guarantee without any assumption on the distribution. This holds for any finite sample size $n$. \\
 
Several extensions, and applications of conformal prediction have been developed for designing uncertainty sets in active learning \citep{Ho_Wechsler08}, anomaly detection \citep{Laxhammar_Falkman15, Bates_Candes_Lei_Romano_Sesia21}, image classification \citep{Angelopoulos_Bates_Malik_Jordan20}, few shot learning \citep{Fisch_Schuster_Jaakkola_Barzilay21}, time series \citep{Xu_Xie20} or to infer the performance guarantee for statistical learning algorithms \citep{Holland20, Cella_Martin20, Bates_Angelopoulos_Lei_Malik_Jordan21}. We refer to the extensive reviews in \citep{Balasubramanian_Ho_Vovk14} for other applications to artificial intelligence.\\
 
Despite these attractive properties, the computation of conformal prediction sets is challenging for regression problems since an infinite number of models must be fitted with an augmented training set $\Data_{n+1}(z)$, for all possible $z \in \bbR$. This is not only expensive, it is simply impossible in most cases. In general, efficiently computing conformal sets with the full data remains an open problem.
The current successful approaches for calculating the set of conformal predictions are twofold.
 
\begin{itemize}
\item Exhaustive search with a \emph{homotopy continuation}\footnote{\eug{also called numerical continuation or path following methods \citep{Allgower_Georg12} are techniques for numerical approximation of a solution curve implicitly defined by a system of equations. In our context, we are interested in solving optimization problems, and we need to describe the solution curve \wrt parameter changes of the first order optimality conditions.}}. The fundamental idea is to rely on the fact that the typicalness function that maps each candidate with the rank of its prediction loss is piecewise constant. As follows, if we carefully manage to list all its transition points, we can find exactly where it is above the prescribed confidence level.
For estimators that have a closed-form formula\eg Ridge \citep{Hoerl_62} or Lasso \citep{Tibshirani96}, it is possible to draw the solutions curve \wrt the input candidate $z$. They are often pieces of linear function which enable the exhaustive listing of the change points of the rank function; see \citep{Nouretdinov_Melluish_Vovk01}, and \citep{Lei19}.\\
 
\item \emph{Inductive confidence machine} also called \emph{Splitting} \citep{Papadopoulos_Proedrou_Vovk_Gammerman02, Lei_GSell_Rinaldo_Tibshirani_Wasserman18}. The observed dataset is divided into two parts. A proper training set to fit the regression model, and an independent calibration set to calculate prediction losses, and ranks. This method is the most computationally efficient because it requires only a single model fit on a sub-part of the data. Separating the roles of the data to build the model, and to evaluate its performance avoids refitting without loss of coverage guarantees. The use of splitting techniques in statistics can be dated at least to \citep{Cox75}
\end{itemize}
 
These strategies have some noticeable limitations. The homotopy methods rely on strong assumptions on the model fit, and are numerically unstable due to multiple matrix inversions that are potentially poorly conditioned. They can suffer from exponential complexity in the worst cases, and must frequently be abandoned because of extremely small step sizes \citep{Gartner_Jaggi_Maria12, Mairal_Yu12}. The data splitting approach does not use all the data in the training phase. It generally results in a wider confidence region\ie of wider size.
As an alternative, a common heuristic unduly restricts the function evaluations to an arbitrary discrete grid of trial values $z$, and select the most typical one among them. These strategies might lose the coverage guarantee, and are still computationally inefficient. As a viable alternative, one relaxes the exact computation of the regression model at every step, and then approximately follows the homotopy continuation path by tightly controlling the optimization error as in \citep{Giesen_Jaggi_Laue10, Ndiaye_Le_Fercoq_Salmon_Takeuchi2019}. \citep{Ndiaye_Takeuchi19} has shown this is a safer discretization strategy, and that it can cope with more general nonlinear regressions. Still, it is so far limited to convex problems with strong regularity assumptions on the model fit, and fails to be applicable to most machine learning prediction methods.
 
\subsection*{Summary of the contributions}
 
We build on the striking remark that for common practical situations, the conformal prediction set is a bounded interval of the real line. Its boundaries are the roots of coverage level $\alpha$ minus the typicalness function, and these can be efficiently computed by a root-finding algorithm such as bisection search, with high precision, and without suffering from the limitations mentioned above. Despite its simplicity, it overcomes the limitations of the aforementioned strategies, and significantly improves, and extends the applicability of full conformal prediction to problems where it was considered intractable so far.\\
 
We highlight some advantages of our approach.
 
\begin{itemize}
 
\item \textbf{\emph{Efficiency}}: we demonstrate that computing a full conformal prediction set is tractable under mild assumptions. Relying on a bisection search, approximating the boundaries of the full exact conformal set at a prescribed accuracy $\epsilon > 0$, requires about $O(\log_2({1}/{\epsilon}))$ number of model fit. The latter, trained on the whole data, allows to obtain a more informative confidence set than splitting methods. Accordingly, we maintain both statistical, and computational efficiency. \looseness=-1\\
 
\item \textbf{\emph{Flexibility}}: our strategy offers considerable freedom on the choice of the regression estimator. For example, it can be defined as an output of a gradient descent process that maximizes a likelihood. It can be terminated when the norm of the gradient is smaller than a tolerance $\epsilon_0$ or after $100$ iterations of the algorithm. Consequently, the estimator can be parameterized by the number of iterations or the optimization error resulting from an iterative process as long as the symmetry of the data is preserved. The proposed root-finding approaches are easily applicable to more sophisticated recent machine learning techniques, such as deep neural networks or models involving a non-convex regularization. \looseness=-1\\
 
\item \textbf{\emph{Simplicity}}: the proposed methods are straightforward to implement. One substantially benefits from freely available scientific computing software packages like \texttt{scikit-learn} \citep{Pedregosa_etal11} or \texttt{scipy} \citep{Virtanen20} to adjust models, and find the endpoints of the conformal set. \looseness=-1
\end{itemize}
 
We also introduce an interpolation point of view of grid based approaches that properly justifies how the coverage guarantee can be maintained along with reduced computational time. In the case where a piecewise linear (or constant) interpolation scheme, and a simple conformity score (for example\eug the absolute value) is used, the assumption that the conformal set is an interval is not required. The computations can be easily carried following a homotopy strategy.
To further reduce the number of model evaluations, we additionally provide a differentiable approximation of the rank function which effectively improves the computational efficiency of the root-finding solvers. We carefully analyze its coverage guarantee, and point out the trade-off between calibration, and number of model evaluations when such smoothing techniques are used. Such smoothing is mainly beneficial when a high precision is required.

\paragraph{Code availability.} The source of our implementation is available at
\begin{center}
\url{https://github.com/EugeneNdiaye/rootCP}
\end{center}
 
\paragraph{Notation.}
For a non zero integer $n$, we denote $[n]$ the set $\{1, \cdots, n\}$. We denote by $Q_{1 - \alpha}$, the $(1 - \alpha)$-quantile of a real valued sequence $(U_i)_{i \in [n + 1]}$, defined as the variable $Q_{1 - \alpha} = U_{(\lceil (n+1)(1-\alpha) \rceil)}$, where $U_{(i)}$ are the $i$-th order statistics. The interval $[a - \tau, a + \tau]$ will be denoted $[a \pm \tau]$. For an index $j$ in $[n+1]$, the rank of $U_j$ among $U_1, \cdots, U_{n+1}$ is defined as $\mathrm{Rank}(U_j) = \sum_{i=1}^{n+1}\mathbb{1}_{U_i \leq U_j}$.

\section{Conformal Prediction}

We recall the arguments presented in \citep{Vovk_Gammerman_Shafer05, Shafer_Vovk08, Lei_GSell_Rinaldo_Tibshirani_Wasserman18} while underlining in details the intuitions, and principles that sustain the construction, and validity of conformal prediction.
Let us consider an input random variable $X$, and output $Y$. The goal is to construct a confidence set for the variable $Y$\ie find a set $\mathcal{C}(X)$ such that
\begin{equation}\label{eq:def_confidence_set}
\mathbb{P}(Y \in \mathcal{C}(X)) \geq 1 - \alpha, \quad \forall \alpha \in (0, 1) \enspace.
\end{equation}
Given a prediction function $\mu(\cdot)$ that maps the input to the output space, and a loss measure $S$, one can assess the prediction error as $E = S(Y, \mu(X))$. It is a random variable with cumulative distribution function $F$ and quantile $Q$ defined as $F(z) = \mathbb{P}(E \leq z)$ and
$Q(\delta) = \inf\{z \in \bbR:\, F(z) \geq \delta \}$.
The main tool for building a set $\mathcal{C}(X)$ that satisfies the probabilistic guarantee in \Cref{eq:def_confidence_set}, is the following classical result\footnote{By definition, we have $Q(\delta)$ is the smallest real value $z$ such that $F(z) \geq \delta$. Thus $\delta \leq F(Q(\delta)) = \mathbb{P}(E \leq Q(\delta)) = \mathbb{P}( F(E) \leq \delta)$.}:
\begin{equation}\label{eq:fundamental_simulation_lemma}
\forall \delta \in (0, 1), \qquad \mathbb{P}(F(E) \leq \delta) \geq \delta \enspace.
\end{equation}
It implies $F(E) = F(S(Y, \mu(X)) \leq 1 - \alpha$ with probability larger than or equal to the confidence level $\delta = 1 - \alpha$. One then defines a confidence set for $Y$ as the collection of candidate $z$ that satisfy the same inequality\ie
$$\mathcal{C}(X) = \{z : F(S(z, \mu(X)) \leq 1 - \alpha\} \enspace.$$

It turns out that the same principle can be applied to compute a confidence set for sequential observations. To do so, the coverage bound in \Cref{eq:fundamental_simulation_lemma} can be extended to empirical cumulative distribution and empirical quantile functions defined as:\looseness=-1
\begin{align*}
&F_{n+1}(z) = \frac{1}{n+1}\sum_{i=1}^{n+1} \mathbb{1}_{E_i \leq z},
&Q_{n+1}(\delta) = \inf\{z \in \bbR: F_{n+1}(z) \geq \delta\} \enspace.
\end{align*}
\begin{lemma}\label{lm:empirical_repartition_lemma}
For a sequence of exchangeable random variables $E_1, \cdots, E_{n+1}$, it holds
$\mathbb{P}(F_{n+1}(E_{n+1}) \leq \delta) \geq \delta$, for any $\delta \in (0,1)$.
\end{lemma}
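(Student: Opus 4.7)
The plan is to reduce the claim to a statement about the rank of $E_{n+1}$ and then exploit exchangeability. By the definition recalled in the excerpt,
\[
F_{n+1}(E_{n+1})=\frac{1}{n+1}\sum_{i=1}^{n+1}\mathbb{1}_{E_i\le E_{n+1}}=\frac{\mathrm{Rank}(E_{n+1})}{n+1},
\]
so bounding $\mathbb{P}(F_{n+1}(E_{n+1})\le\alpha)$ amounts to bounding the probability that $\mathrm{Rank}(E_{n+1})$ does not exceed $(n+1)\alpha$. The rest of the proof is therefore a classical rank-uniformity argument.

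For that rank step, I would invoke exchangeability of $(E_1,\ldots,E_{n+1})$, which immediately transfers to exchangeability of the induced tuple $(\mathrm{Rank}(E_1),\ldots,\mathrm{Rank}(E_{n+1}))$: in particular every $\mathrm{Rank}(E_j)$ has the same marginal law as $\mathrm{Rank}(E_{n+1})$. Pairing this with the deterministic identity $\sum_{j=1}^{n+1}\mathbb{1}_{\mathrm{Rank}(E_j)\le k}=k$, which holds pathwise when the $E_i$'s are distinct, I can take expectations, use exchangeability to equate all $n+1$ terms in the sum, and divide by $n+1$ to obtain $\mathbb{P}(\mathrm{Rank}(E_{n+1})\le k)=k/(n+1)$ for every integer $k\in\{0,\ldots,n+1\}$. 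Translating back via the first display and choosing $k=\lceil (n+1)\alpha\rceil$ then yields the advertised bound, since $k/(n+1)\ge\alpha$ and $F_{n+1}(E_{n+1})$ lives on the grid $\{1/(n+1),\ldots,1\}$.

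The main obstacle I anticipate is the handling of ties, since the pathwise identity above collapses to a one-sided inequality whose direction depends on whether ranks are defined with $\le$ or $<$. The standard remedy is to randomize tie-breaking by adding vanishing iid noise to the $E_i$'s, which preserves exchangeability and reduces everything to the distinct-values case; the perturbation can then be removed using the right-continuity of $F_{n+1}$. This makes the exchangeability-driven rank-uniformity argument go through in full generality, and in particular recovers the stated inequality for arbitrary exchangeable sequences.
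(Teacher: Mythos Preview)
Your rank-uniformity route is a genuinely different approach from the paper's: the paper works through the empirical quantile, starting from the pathwise bound $\alpha\le F_{n+1}(Q_{n+1}(\alpha))$, taking expectations, identifying $\{E_i\le Q_{n+1}(\alpha)\}$ with $\{F_{n+1}(E_i)\le\alpha\}$, and then invoking exchangeability to equate the $n+1$ probabilities. Your pathwise identity $\sum_j\mathbb{1}_{\mathrm{Rank}(E_j)\le k}=k$ plays the role of the quantile bound and is arguably more direct.

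There is, however, a real gap in your final translation. Because $\mathrm{Rank}(E_{n+1})$ is integer-valued, the event $\{F_{n+1}(E_{n+1})\le\alpha\}$ is exactly $\{\mathrm{Rank}(E_{n+1})\le\lfloor(n+1)\alpha\rfloor\}$, not $\{\mathrm{Rank}(E_{n+1})\le\lceil(n+1)\alpha\rceil\}$; choosing $k=\lceil(n+1)\alpha\rceil$ bounds a strictly larger event and so only delivers the wrong inequality direction. With the correct $k=\lfloor(n+1)\alpha\rfloor$ your identity yields $\lfloor(n+1)\alpha\rfloor/(n+1)$, which can fall below $\alpha$: for $n{+}1=2$ with iid continuous $E_i$ and $\alpha=0.9$ one gets $\mathbb{P}(F_2(E_2)\le 0.9)=\tfrac12$. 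Your tie-breaking perturbation cannot repair this, since the failure already occurs for atomless $E_i$. To be fair, the paper's proof has the analogous slip at the step $\mathbb{P}(E_i\le Q_{n+1}(\alpha))=\mathbb{P}(F_{n+1}(E_i)\le\alpha)$, which in the distinct case is only the inequality $\ge$. Both arguments become correct when $(n+1)\alpha$ is an integer, but neither establishes the claim for every $\alpha\in(0,1)$ as stated.
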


\begin{proof}  We follow the proof in \citep{Romano_Patterson_Candes19}.
By definition of the empirical quantile, we have $$\delta \leq F_{n+1}(Q_{n+1}(\delta)) = \frac{1}{n+1}\sum_{i=1}^{n+1} \mathbb{1}_{E_i \leq Q_{n+1}(\delta)} \enspace.$$
Taking the expectation on both side, we have
\begin{align*}
\delta &\leq \frac{1}{n+1}\sum_{i=1}^{n+1} \mathbb{P}(E_i \leq Q_{n+1}(\delta))
= \frac{1}{n+1}\sum_{i=1}^{n+1} \mathbb{P}(F_{n+1}(E_i) \leq \delta) \enspace.
\end{align*}

Moreover, we have for any $i$ in $[n]$, $\mathbb{P}(F_{n+1}(E_i) \leq \delta) = \mathbb{P}(F_{n+1}(E_{n+1}) \leq \delta)$ by excheangeability. Hence the result.
\begin{flushright}
    $\blacksquare$
\end{flushright}
\end{proof}

Using \Cref{lm:empirical_repartition_lemma}, we have $F_{n+1}(E_{n+1}) \leq 1 - \alpha$ with probability larger than or equal to the confidence level $\delta = 1 - \alpha$. Given the $n$ previous observations, one can define a confidence set for an unobserved variable $E_{n+1}$ as the random set $$\{z : F_{n+1}(z) \leq 1 - \alpha\} \enspace.$$

In supervised statistical learning problems, where we observe both the responses, and the features, one can apply this principle while taking benefits of an underlying model trained on the observed data. For the augmented dataset $\mathcal{D}_{n+1}(z) = \mathcal{D}_{n} \cup \{(x_{n+1}, z)\}$ for $z \in \bbR$, an example of predictive model is given as
$\mu_z(x) = \Phi(x, \hat \beta(z))$, where $\Phi$ is a regression model\eg a kernel machine or a Deep Neural Network with parameter $\hat\beta(z)$ adjusted on the data. For example, by using empirical risk minimization principle, one defines
\begin{equation}\label{eq:model_optimization}
\hat \beta(z) \in \argmin_{\beta \in \bbR^p} L(\beta \mid \mathcal{D}_{n+1}(z)) + \lambda \Omega(\beta) \enspace,
\end{equation}
where $\lambda > 0$ and
$L(\beta \mid \mathcal{D}_{n+1}(z)) = \sum_{i=1}^{n} \ell(y_i, \Phi(x_i, \beta)) + \ell(z, \Phi(x_{n+1}, \beta))$ is the data fitting term and the regularization function $\Omega$ enforces structured solutions\eg sparsity.

\paragraph{Examples.}
A popular example of an instance-wise loss function found in the literature is the \textit{power norm}, where $\ell(a, b) = |a - b|^q$. When $q=2$, this corresponds to classical linear regression. Cases where $q \in (0, 2)$ are common in robust statistics. In particular, $q=1$ is known as least absolute deviation. The \textit{logcosh} loss $\ell(a, b) = \gamma\log(\cosh(a - b)/\gamma)$ is a differentiable alternative to the $\ell_{\infty}$-norm. One can also have the \textit{Linex} loss function \citep{Gruber10, Chang_Hung07} which provides an asymmetric loss $\ell(a, b) = \exp(\gamma(a - b)) - \gamma(a - b) - 1$, for $\gamma \neq 0$. The regularization functions $\Omega$\eg Ridge \citep{Hoerl_Kennard70} or sparsity inducing norms \citep{Bach_Jenatton_Mairal_Obozinski12, Obozinski_Bach16} can be considered as well as non convex penalties  \citep{Xie_Huang09}. \\

Given the fitted model $\mu_z(\cdot)$ and a loss measure $S$, let us define the sequence of instance-wise prediction errors as:
\begin{align*}
\forall i \in [n],\, E_{i}(z) = S(y_i,\, \mu_z(x_{i})) \text{, and } 
E_{n+1}(z) = S(z,\, \mu_z(x_{n+1})) \enspace.
\end{align*}
The sequence $\{E_{1}(y_{n+1}), \cdots, E_{n}(y_{n+1}),E_{n+1}(y_{n+1})\}$ is exchangeable as long as the data $\{(x_i, y_i)\}_{i=1}^{n+1}$ is exchangeable, and the model fit $\mu_z(\cdot)$ is invariant \wrt permutation of the data. We can then apply \Cref{lm:empirical_repartition_lemma} to obtain a coverage guarantee.\looseness=-1
\begin{definition} The full conformal prediction set is formally defined as
\begin{equation}\label{eq:conformal_estimate_set}
\Gamma^{(\alpha)}(x_{n+1}) = \{z : F_{n+1}(E_{n+1}(z)) \leq 1 - \alpha\} \enspace,
\end{equation}
where
\begin{equation}\label{eq:conformal_cdf}
F_{n+1}(E_{n+1}(z)) = \frac{1}{n+1}\sum_{i=1}^{n+1} \mathbb{1}_{E_i(z) \leq E_{n+1}(z)} \enspace.
\end{equation}
The term "full" refers to the fact that the entire data set is used to fit the regression model; in contrast to the splitting approach presented in detail below. 
\end{definition}
\Cref{lm:empirical_repartition_lemma} implies that the set $\Gamma^{(\alpha)}(x_{n+1})$ is a valid confidence set for $y_{n+1}$ in the sense of \Cref{eq:def_confidence_set}\ie
$\mathbb{P}(y_{n+1} \in \Gamma^{(\alpha)}(x_{n+1})) \geq 1 - \alpha$ for any $\alpha$ in $(0, 1)$.
Somehow, the refitting procedure with the extended dataset $\mathcal{D}_{n+1}(z)$, puts all the variables on equal feet, and preserves the exchangeability of the sequence of prediction errors.
Using the rank function, we have $(n+1) F_{n+1}(E_{n+1}(z)) = \mathrm{Rank}(E_{n+1}(z))$,
and one can rewrite the CP set as (which is in fact the traditional notation) \looseness=-1
\begin{equation*}
\Gamma^{(\alpha)}(x_{n+1}) = \{z : \pi(z) \geq \alpha\} \enspace,
\end{equation*}
where $z \mapsto \pi(z)$ is the typicalness function that measures how conformal a candidate is. It is defined as
\begin{equation}\label{eq:typicalness}
\pi(z) = 1 - \frac{1}{n+1} \mathrm{Rank}(E_{n+1}(z)) 
= 1 - F_{n+1}(E_{n+1}(z)) \enspace.
\end{equation}
\Cref{lm:empirical_repartition_lemma} reads $\mathbb{P}(\pi(y_{n+1}) \leq \alpha) \leq \alpha$\ie the random variable $\pi(y_{n+1})$ takes small values with small probability. Thus, it is unlikely that $y_{n+1}$ will take the value $z$ when $\pi(z)$ is small. More precisely, $\pi(y_{n+1})$ is (sub) uniformly distributed as usual for classical statistics for hypothesis testing. For example $p$-value function satisfies such a property under the null hypothesis; see \citep[Lemma 3.3.1]{Lehmann_Romano06}. One can then interpret the typicalness $\pi(\cdot)$ as a $p$-value function for testing the null hypothesis $H_0: y_{n+1}=z$ against the alternative $H_1: y_{n+1} \neq z$, for $z$ in $\bbR$. The conformal prediction set merely corresponds to the collection of candidate $z$ for which the null hypothesis $H_0$ is not rejected.

\section{Computing Conformal Prediction Set}
 
For regression problem where $y_{n+1}$ lies in a subset of $\bbR$, one need to evaluate $\pi(z)$ in \Cref{eq:typicalness}, and so refitting the model $\mu_z(\cdot)$ for infinitely many candidate $z$. This merely renders the overall computation challenging, and leaves the problem open in general.
Nevertheless, some peculiar regularity structure of the typicalness function $\pi(\cdot)$ can be exploited. For example, by utilizing the fact that it is piecewise constant, it is sufficient to enumerate the transition points (when they are finite) to compute the conformal set. This is possible for a limited number of cases\eg Ridge or Lasso where the map $z \mapsto \mu_z(\cdot)$ can be explicitly described. Unfortunately, only a very small class of statistical learning problems has such nice regularity structure. For other estimators, the computation of CP set when $y_{n+1}$ can take countless number of values, is unclear.
 
\paragraph{Splitting.}
To overcome this issue, the split conformal prediction set introduced in \citep{Papadopoulos_Proedrou_Vovk_Gammerman02}, separates the model fitting, and the score ranking step. Let us define
\begin{itemize}
\item the training set $\mathcal{D}_{\rm{tr}} = \{(x_1, y_1), \cdots, (x_m, y_m)\}$ with $m < n$,
\item the calibration set $\mathcal{D}_{\rm{cal}} = \{(x_{m+1}, y_{m+1}), \cdots, (x_n, y_n)\}$.
\end{itemize}
Then the model is fitted on the training set $\mathcal{D}_{\rm{tr}}$ to get $\mu_{\rm{tr}}(\cdot)$, and define the score function on the calibration set $\mathcal{D}_{\rm{cal}}$:
\begin{align*}
\forall i \in [m+1, n], \, E_{i}^{\rm{cal}} = S(y_i,\, \mu_{\rm{tr}}(x_i)), \text{ and }
E_{n+1}^{\rm{cal}}(z) = S(z,\, \mu_{\rm{tr}}(x_{n+1})) \enspace.
\end{align*}
Thus, we obtain the split typicalness function as
\begin{align*}
\pi_{\rm{split}}(z) &= 1 - F_{\rm{split}}(E_{n+1}^{\rm{cal}}(z)), \text{ where}\\
F_{\rm{split}}(E_{n+1}^{\rm{cal}}(z)) &= \frac{1}{n - m + 1}\sum_{i=m+1}^{n+1} \mathbb{1}_{E_{i}^{\rm{cal}} \leq E_{n+1}^{\rm{cal}}(z)} \enspace.
\end{align*}
The latter is proportional to the rank of the $(n+1)$th score on the calibration set. Finally, we define
\begin{align*}
\Gamma_{\rm{split}}^{(\alpha)}(x_{n+1}) &= \{z: \pi_{\rm{split}}(z) \geq \alpha \}
= \{z: E_{n+1}^{\rm{cal}}(z) \leq Q_{1-\alpha}^{\rm{cal}}\}
\enspace,
\end{align*}
where $Q_{1-\alpha}^{\rm{cal}}$ is the $(1-\alpha)$ quantile of the calibration scores $\{E_{m+1}^{\rm{cal}}, \cdots, E_{n+1}^{\rm{cal}}\}$.
When the score function is the absolute value $S(a, b) = |a - b|$, the split CP set is the interval $\Gamma_{\rm{split}}^{(\alpha)}(x_{n+1}) = [\mu_{\rm{tr}}(x_{n+1}) \pm Q_{1-\alpha}^{\rm{cal}}]$.
While this approach avoids the computational bottleneck, the statistical efficiency of the model can be reduced due to a significantly smaller sample available during the training, and calibration phase. Moreover, the length of the split conformal set tends to have a higher variance. \eug{In general, the proportion of training vs calibration set is a hyperparameter that requires appropriate tuning: a small calibration set leads to highly variable conformity scores, and a small training set leads to poor model fitting. In all our experiments, we set the splitting proportion to $2$, which means that the two sets play symmetric roles. Since the sequence of scores $\{ E_{m+1}^{\rm{cal}}, \ldots, E_{n}^{\rm{cal}}, E_{n+1}^{\rm{cal}}(z)\}$ is exchangeable, the \Cref{lm:empirical_repartition_lemma} implies that $\mathbb{P}(y_{n+1} \in \Gamma_{\rm{split}}^{(\alpha)}(x_{n+1})) \geq 1 - \alpha$.}
 
\eug{\paragraph{Cross-conformal Predictors.} The trade-off mentioned above is very recurrent in machine learning, and often appears in the debate between bias reduction, and variance reduction. It is often decided by the cross-validation method with several folds \citep{Arlot_Celisse10}. \textit{Cross-conformal predictors} \citep{Vovk15} follow the same ideas, and exploit the full dataset for calibration, and significant proportions for training the model. The dataset is partitioned into $K$ folds, and one performs a split conformal set by sequentially defining the $k$th fold as calibration set, and the remaining as training set for $k \in \{1, \ldots, K\}$. However, aggregating the different pi-values is not straightforward, and the validity of the method might be jeopardized without stronger assumptions on the score function, see \citep{Carlsson_Eklund_Norinder14, Linusson_Norinder_Bostrom_Johansson_Lofstrom17}. More precisely, it can be shown that the confidence level is inflated by a factor of $2$\ie the (not improvable) theoretical coverage level is $1 - 2\alpha$ instead of $1 - \alpha$, see \citep{Barber_Candes_Ramdas_Tibshirani21}. Under additional stability assumption, Cross-conformal predictors can only \textit{approximately} achieve the target coverage $1 - \alpha$. Otherwise, without approximation, in order to remove the factor $2$, one can consider an overly conservative set whose extremity are defined as the smallest, and largest residual over all leave-one-out residuals. The leave-one-out (also called Jackknife) CP set, will require $K=n$ model fit which is prohibitive even when $n$ is moderately large. On the other hand, the $K$-fold version will require $K$ model fit but will come at the cost of fitting on a lower sample size, and will leads to an additional excess coverage of $O(\sqrt{2/n})$. A Bootstrap version \citep[Appendix B]{Vovk15} will suffer from the same inflation \citep{Kim_Xu_Barber20}. In all cases, we are not aware of a (variant of) cross-conformal predictors that simultaneously achieves $1 - \alpha$ provable coverage guarantee, and a non-conservative prediction set. Nevertheless, the practical performance is fairly acceptable both computationally, and statistically. In this paper, we only compare with the methods that provably achieve the prescribed $1 - \alpha$ confidence level, namely the splitting method, and the oracle conformal prediction described in \Cref{sec:experiments}.
}
 
\subsection{Approximation to a Prescribed Accuracy}
\label{subsec:Approximation_to_a_Prescribed_Accuracy}
 
In this paper, we promptly take advantage of the remarkable fact that the conformal regions are \emph{often} intervals. We subsequently take an alternative direction which carefully avoids tracking the integral path of all model fit, and also avoids any data splitting. When the $(1-\alpha)$-level set of the function $z \mapsto F_{n+1}(E_{n+1}(z))$ is convex\eg \Cref{fig:initialization}, we propose to employ a numerical root-finding solver to approximate the endpoints of the interval. \eug{The statistical validity is automatic, we obtain simultaneously an upper, and lower bound on each extremity of the confidence set, and the approximation error $\epsilon$ can be made arbitrarily small at the cost of $O(\log(1/\epsilon))$ number of model fit; without inflation of the confidence level.\looseness=-1}
 
\subsection*{Outline of the Algorithm: \texttt{rootCP}}
 
Assuming that the conformal set is a non empty interval of finite length, we denote
$$\Gamma^{(\alpha)}(x_{n+1}) = [\ell_{\alpha}(x_{n+1}), u_{\alpha}(x_{n+1})] \enspace.$$
Given a tolerance $\epsilon > 0$, we proceed as follows:
\begin{enumerate}
\item find $z_{\min} < z_0 < z_{\max}$ such that
\begin{equation}\label{eq:initialization_condition}
\pi(z_{\min}) < \alpha < \pi(z_{0}) \text{ and } \alpha > \pi(z_{\max}) \enspace.
\end{equation}
\item Perform a bisection search in $[z_{\min}, z_0]$. It will output a point $\hat \ell$ such that $\ell_{\alpha}(x_{n+1})$ belongs to $[\hat \ell \pm \epsilon]$ after at most $\log_2(\frac{z_0 - z_{\min}}{\epsilon})$ iterations.
 
\item Perform a bisection search in $[z_0, z_{\max}]$. It will output a point $\hat u$ such that $u_{\alpha}(x_{n+1})$ belongs to $[\hat u \pm \epsilon]$ after at most $\log_2(\frac{z_{\max} - z_0}{\epsilon})$ iterations.
\end{enumerate}
 
\begin{figure*}
\centering
\subfigure[Failed initialization at first trial]{\includegraphics[width=0.49\columnwidth]{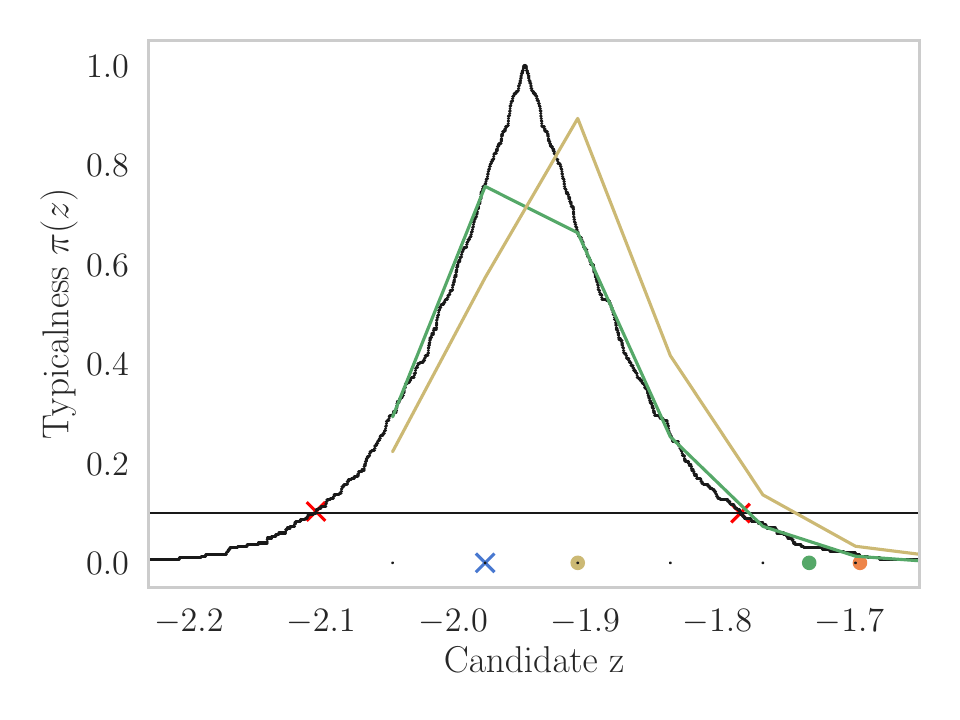}}
\label{subfig:initialization_fail}
%
\subfigure[Successful initialization at first trial]{\includegraphics[width=0.49\columnwidth]{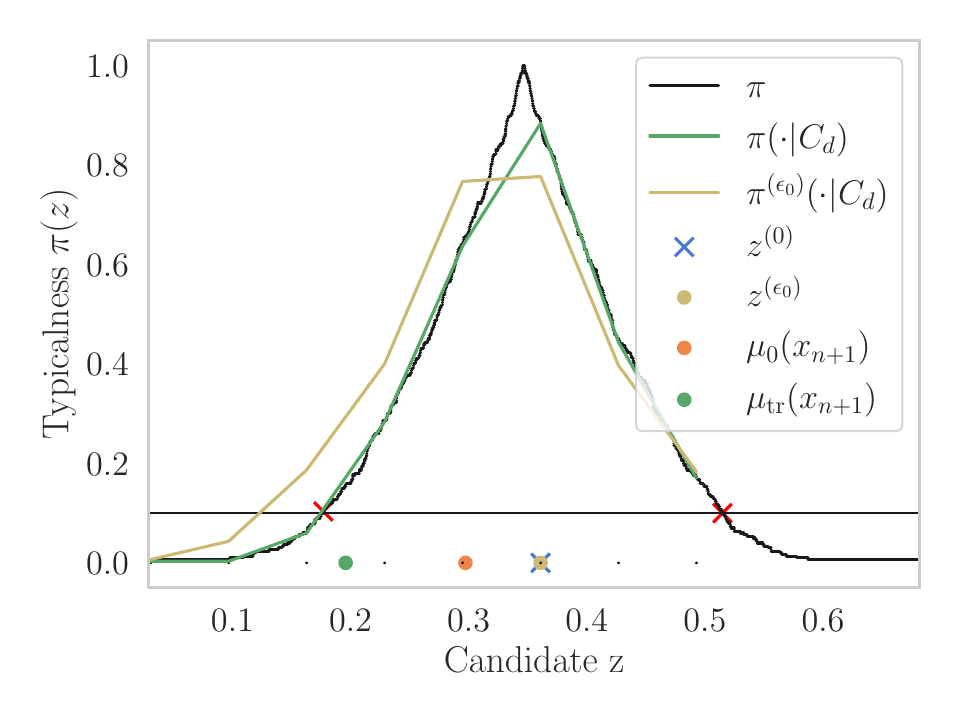}}
\label{subfig:initialization_succeed}
\caption{\textbf{Illustration of the initialization steps} when both the initial prediction based on observed data $z_0 = \mu_{\mathcal{D}_n}(x_{n+1})$, and midpoint of split conformal interval $\mu_{\rm{tr}}(x_{n+1})$ fails to be in the conformal prediction set whose boundaries are delimited by the red crosses. The synthetic data are generated with $\texttt{sklearn}$ as $X, y =$ \texttt{make\_regression}$(n=300, p=50)$. We choose an optimization accuracy of $\epsilon_0=\norm{(y_1, \cdots, y_n)}_{2}^{2}/10^4$ for approximating the ridge estimator. The trial points are $C_d = \{z_1, \ldots, z_d\}$ with $d = 10$, and we denoted $\displaystyle z^{(\epsilon)} \in \argmax_{z \in C_d}\pi^{(\epsilon)}(z)$ is the most conformal trial candidate at precision $\epsilon \geq 0$. To be more explicit, the approximated conformity functions obtained from the $C_d$ grid are denoted $\pi^{(\epsilon_0)}(\cdot \mid C_d)$ when early stopping at optimization accuracy $\epsilon_0$ is used, and $\pi(\cdot \mid C_d)$ when the exact solution is used. \label{fig:initialization}}
\end{figure*}
 
\begin{figure*}
\centering
\subfigure[Boundaries of conformal sets]{\includegraphics[width=0.49\columnwidth]{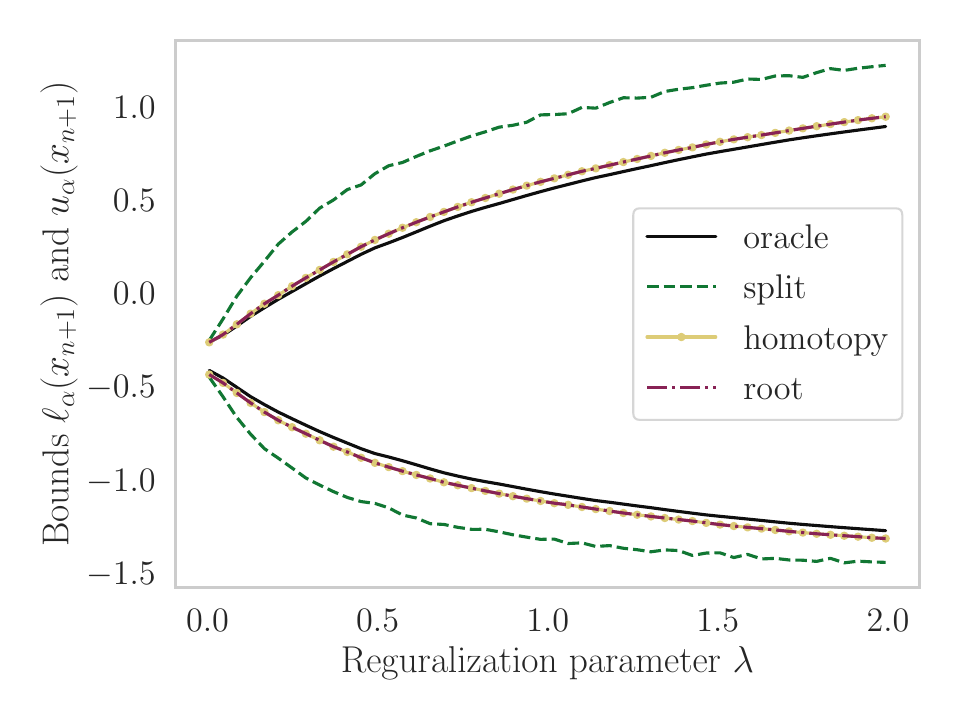}}
\label{subfig:ridge_path_cp}
%
\subfigure[Computational times]{\includegraphics[width=0.49\columnwidth]{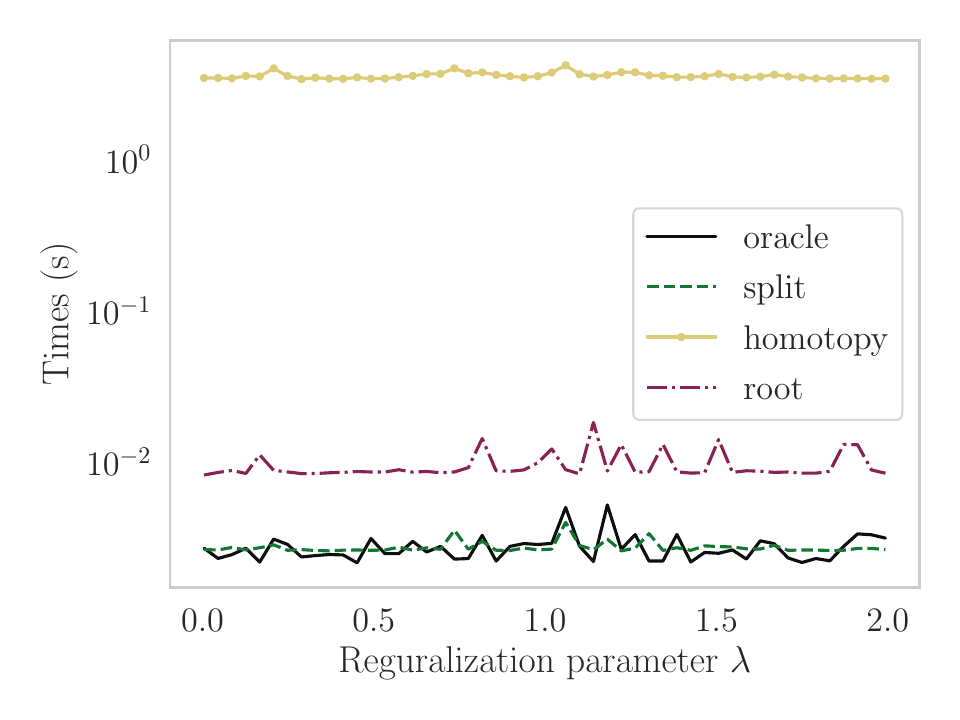}}
\label{subfig:ridge_path_time}
\caption{\textbf{Benchmark on ridge regression.} Conformal prediction set computed with various regularization parameter on synthetic dataset generated from $\texttt{sklearn}$ as $X, y =$ \texttt{make\_regression}$(n=1000, p=100)$ with $90$ informative features. For the splitting method, we average the results of $100$ independent run. For the proposed root finding method, we approximate the boundaries of the exact set at precision $10^{-12}$. \label{fig:benchmark_ridge}}
\end{figure*}
 
\subsection*{Initialization}
%
For the initial lower, and upper bounds, we suggest
\begin{equation*}
z_{\min} = \min_{i \in [n]} y_i \text{ and } z_{\max} = \max_{i \in [n]} y_i \enspace.
\end{equation*}
For most of the situations encountered in our numerical experiments, we consistently get $\pi(z_{\min})$, and $\pi(z_{\max})$ both smaller than the threshold level $\alpha$. Otherwise, we can always take values even farther apart without affecting the complexity thanks to the logarithmic dependence in the length of the initialization brackets. This is especially necessary when the total number of samples $n$ is small\footnote{Indeed, we have
$\mathbb{P}(y_{n+1} \in [z_{\min}, z_{\max}]) \geq 1 - \frac{1}{n+1}$. Hence when $n$ is sufficiently large\ie $n \geq 1 + 1/\alpha$, then $[z_{\min}, z_{\max}]$ is a $(1-\alpha)$ confidence set.}.
The most crucial part is to choose $z_0$ so that $\pi(z_0) > \alpha$. It is equivalent to get a point in the interior of the conformal set itself. In the ideal case where the length of the conformal set is extremely small, finding an initialization point might be notoriously hard. Indeed, it corresponds to a \emph{rare event} equivalent to sampling a point in a low probability region.
We adopted a simple strategy which consists in estimating $y_{n+1}$ with the observed data $\mathcal{D}_n$. We subsequently denote it
\begin{equation*}
z_0 = \mu_{\mathcal{D}_n}(x_{n+1}) \enspace.
\end{equation*}
In our sequence of repetitive numerical experiments, this choice rarely fails.
Naturally, its success depends on the prediction capabilities of the model fit. In the rare cases where it fails, we propose to test the initialization condition on some query points selected on an initial estimation $[z_{\alpha}^{-}, z_{\alpha}^{+}]$ of the CP set. This localization step aims to exploit additional problem structure, and can be interpreted as an iterative importance sampling to maintain a reasonably low computational cost.
 
\begin{enumerate}
\item \emph{Localization.} Given an easy to compute estimate set $[z_{\alpha}^{-}, z_{\alpha}^{+}]$ that is potentially larger\footnote{One can slightly enlarge the set by taking $[z_{\alpha}^{-} - 0.5 (z_{\alpha}^{+} - z_{\alpha}^{-}), z_{\alpha}^{+} + 0.5 (z_{\alpha}^{+} - z_{\alpha}^{-})].$} than the targeted conformal set, we select its mid point
$$ z_0 = \frac{z_{\alpha}^{+} + z_{\alpha}^{-}}{2} \enspace.$$
If $z_0$ satisfies $\pi(z_0) > \alpha$, then we have a valid initialization by paying only a single model fit. Otherwise, we run the next step on the bracket $[z_{\alpha}^{-}, z_{\alpha}^{+}]$.
 
For instance, one can use the interval obtained from the splitting approach $[z_{\alpha}^{-}, z_{\alpha}^{+}] = \Gamma_{\rm{split}}^{(\alpha)}(x_{n+1})$ or a rough approximation $[z_{\alpha}^{-}, z_{\alpha}^{+}] = \{z: \pi_{\mathcal{D}_n}(z) > \alpha\}$ where $\pi_{\mathcal{D}_n}(\cdot)$ is an unsafe estimation of $\pi(\cdot)$ with $\mu_z(x)$ replaced by $\mu_{\mathcal{D}_n}(x)$ for any candidate $z$, and any input feature $x$.\looseness=-1 \\
 
\item \emph{Sampling.} For a small number $d$\eg $d=5$, and given a bracket search $[z^-, z^+]$, select candidates $C_d = \{z_1, \cdots, z_d\}$ uniformly. If there is $\pi(z_0) > \alpha$ for a $z_0$ in $C_d$, we have a valid initialization. Otherwise, we use these query points to interpolate the model fit as in \cref{eq:piece_linear_interpolation}. Thus, by selecting additional points that have a higher typicalness according to the interpolated model, one can refine the sampling set $C_d$, and repeat the process.\\
 
For completeness, we summarize the procedure in \Cref{alg:rootCP}.
 
\end{enumerate}
 
\begin{algorithm}
    \caption{\texttt{rootCP} using bisection search \Cref{alg:bisection}}
    \label{alg:rootCP}
 \begin{algorithmic}
    \STATE {\bfseries Input:} data $\Data_n = \{(x_1, y_1), \ldots, (x_n, y_n)\}$, and $x_{n+1}$
    \STATE Coverage level $\alpha \in (0, 1)$, accuracy $\epsilon > 0$
    \STATE {\bfseries Output:} approximation of $\Gamma_{\rm{up}}^{(\alpha)}(x_{n+1}) = [\ell_{\alpha}(x_{n+1}), u_{\alpha}(x_{n+1})]$ up to accuracy $\epsilon$.
    \STATE
    \STATE \quad \# \emph{Initialization}
    \STATE Set $z_{\min} = y_{(1)} \text{ and } z_{\max} = y_{(n)}$
    \STATE Set $z_0 = \mu_{0}(x_{n+1})$ where we fitted a model $\mu_{0}$ on the training data $\Data_n$
    \IF{$z_{\min} < z_0 < z_{\max}$ such that
    \begin{equation}\label{eq:initialization_success}
        \pi(z_{\min}) < \alpha < \pi(z_{0}) \text{ and } \alpha > \pi(z_{\max})
    \end{equation}}
        \STATE
        \STATE\# \emph{Approximation of Boundary Points}
        \begin{enumerate}
            \item $\hat \ell = \texttt{bisection\_search}(\pi - \alpha, z_{\min}, z_0, \epsilon)$    
            \item $\hat u = \texttt{bisection\_search}(\pi - \alpha, z_0, z_{\max}, \epsilon)$
        \end{enumerate}
        \STATE {\bfseries Return:} $[\hat \ell, \hat u]$
    \ELSE
        \STATE
        \STATE\# \emph{Initialization Failure}
        \STATE Perform a heuristic search in a grid point $C_d = \{z_1, \ldots, z_d\}$ until \Cref{eq:initialization_success} is met
        \STATE Go to root-finding steps $1.$ and $2.$
    \ENDIF
    \STATE {\bfseries Return:} \emph{Global Failure}: refer to alternative methods (\egc \texttt{splitCP} or \texttt{interpolCP})
 \end{algorithmic}
 \end{algorithm}
 
 \begin{algorithm}
    \caption{\texttt{bisection\_search}}
    \label{alg:bisection}
 \begin{algorithmic}
    \STATE {\bfseries Input:} Function $f$, scalars $a < b$ such that $\sign(f(a)) \neq \sign(f(b))$, accuracy $\epsilon > 0$
    \STATE Set $\texttt{max\_iter} = \log_2(\frac{b - a}{\epsilon})$
    \FOR{$i=1$ {\bfseries to} \texttt{max\_iter}}
        \STATE $c = \frac{a + b}{2}$
        \IF{$f(c)=0$ or $c - a \leq \epsilon$}
            \STATE {\bfseries Return:} $c$
        \ENDIF
        \IF{$f(c) f(a) < 0$}
            \STATE $b = c$
        \ELSE
            \STATE $a = c$
        \ENDIF
    \ENDFOR
    \STATE {\bfseries Return:} the algorithm did not converge, increase $\texttt{max\_iter}$.
 \end{algorithmic}
 \end{algorithm}
 
Additionally, we explain below how this model fit interpolation can be used to obtain an alternative CP set. Note that its midpoint can also be used as a candidate for initialization.
For computational efficiency, one can rely predominantly on the fact that for the usual prediction problems in machine learning, it is unneeded to optimize below the inevitable statistical error \citep{Bousquet_Bottou08}. This means that a high optimization accuracy in the model fit might be unnecessary to achieve better generalization performances. Therefore, with a coarse optimization tolerance, we can preview the final shape of the conformity function. Whence, one can replace $\pi(z)$ by $\pi^{(\epsilon_0)}(z)$ which is computed with a rough optimization error $\epsilon_0$ in order to guess the shape of the function $\pi(\cdot)$. Similarly, if $\pi(z_0)$ fails to be a valid initialization, then decrease $\epsilon_0$, and repeat the process.
In all our experiments, it works fine after a very few number of iterations. Nonetheless, we do not have strategies to avoid worst-case situations or any mathematical guarantee in the total number of iterations needed to find a valid initialization.
We illustrate this strategy in \Cref{fig:initialization} in both situations of failure, and success.

\paragraph{Further complexity reduction.}
 
In cases where the regression map $z \mapsto \mu_z(x)$ for any feature $x$, can be traced with homotopy as in Ridge \citep{Nouretdinov_Melluish_Vovk01}, and Lasso \citep{Lei19}, it takes $O(n^2)$ to compute the exact conformal set. This can be reduced to $O(n\log n)$ by sorting the roots of the instance-wise scores $E_i(z) - E_{n+1}(z)$ for $i$ in $[n]$, and cleverly flattening the double loop when evaluating the ranks of the score functions \citep[Chapter 2.3]{Vovk_Gammerman_Shafer05}. By relaxing the exactness, none of these two steps is needed in our approach. We obtain an asymptotic improvement to $O(n\log_2(1/\epsilon))$, and an easier to implement algorithm. \\
 
When the model fit is parameterized by the solution of optimization problem in \Cref{eq:model_optimization}, the regularity of the loss function, and penalty terms play a major role in the computational tractability of the full conformal prediction set. Leveraging smoothness, and convexity assumptions on the loss or penalty functions, it has been shown in \citep{Ndiaye_Takeuchi19} that approximate solutions can be used without refitting the model for close candidates $z$. The resulting conformal set is $\overline{\Gamma}^{(\alpha, \epsilon)} = \{z \in \bbR:\,  \overline{\pi}(z, \epsilon) > \alpha \}$ where the corresponding typicalness function incorporates the optimization error\ie
$$ \overline{\pi}(z, \epsilon) = \frac{1}{n+1}\sum_{i=1}^{n+1} \mathbb{1}_{E_{i}(z) \geq E_{n+1}(z) - 2\sqrt{2 \nu \epsilon}} \enspace.$$
One can further show that the typicalness function based on exact solution $\hat\pi(\cdot)$ is uniformly upper bounded by $\overline{\pi}(\cdot, \epsilon)$, and then $\hat\Gamma^{(\alpha)} \subset \overline{\Gamma}^{(\alpha, \epsilon)}$. This can be equally used to reduce the number of model fit, and also to wrap the CP set based on exact solution by applying \texttt{rootCP} directly to $\overline{\pi}(\cdot, \epsilon)$ instead of computing a whole approximation path. \looseness=-1\\

Compared with the homotopy approach, \texttt{rootCP} will always make a smaller number of model fits. By merely storing each model evaluation, it benefits from warm-start boosting by employing the solutions of the previous function call. Alas, the homotopy approaches require either an exact solution or an approximate one with a strict control of the optimization error. This control is not always available if one does not provide a computable upper bound of it, for example by precisely evaluating the duality gap. Such bounds are hardly available in non-convex settings which greatly reduce its applicability in modern machine learning techniques. Meanwhile, the complexity of the approximate homotopy algorithm is $O(\frac{z_{\max} - z_{\min}}{\sqrt{\epsilon_0}})$ where $\epsilon_0$ is the optimization error of the model fit. Additionally to the linear dependence in the initial interval length, it cannot be launched for small $\epsilon_0$ whereas the number of model fit in the root-finding approach is not degraded. In a nutshell, the proposed method avoids the computation of the whole path. Hence, it enjoys an exponential improvement over the homotopy approach \wrt to the initial interval length $z_{\max} - z_{\min}$, and an overall complexity that is independent of $\epsilon_0$. It can then be used with highly optimized model fit where the homotopy method cannot even be launched.
 
\paragraph{Drawbacks.}
 
Full conformal prediction set is \emph{not always} an interval. When it is a union of few well separated intervals, our proposed method cannot be applied without finely bracketing these intervals. One can include a human in the loop. The discrete function $\pi^{(\epsilon_0)}$ offers a cheap pre-visualization of the landscape of the conformity function that allows to detect these situations, and infer a proper bracketing. At this point, efficiently enumerating all the roots remains a challenging task that we leave as an open problem. In the following proposition, we provide a sufficient condition so that the conformal set is an interval. \eug{It essentially consists of a simple condition so that the conformity function is monotonically increasing until it reaches its maximum value, and then monotonically decreasing.}
 
\begin{proposition}\label{prop:quasiconcavity_test}
If for any $i$ in $[n]$, the difference of instance-wise error function $z \mapsto E_i(z) - E_{n+1}(z)$ is quasi-concave, and has two zeros $a_i \leq b_i$ such that
$$ \max_{i \in [n]} a_i \leq \min_{i \in [n]} b_i \enspace, $$
then the typicalness function $\pi(\cdot)$ is quasi-concave, and the conformal prediction set at a level $\alpha \in (0, 1)$ is either empty or an interval.
\end{proposition}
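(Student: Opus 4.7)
The plan is to reduce everything to a count-of-intervals argument for the typicalness function $\pi(z)$ and to exploit the separation hypothesis $\max_i a_i \le \min_i b_i$ in order to show that this count is unimodal on $\mathbb{R}$. From unimodality of a real-valued function, quasi-concavity of $\pi$ and the interval structure of the conformal set $\Gamma^{(\alpha)}(x_{n+1})$ follow immediately by taking superlevel sets.

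The first step is to rewrite the typicalness function. Using the identity $1 - \mathbb{1}_{u \leq v} = \mathbb{1}_{u > v}$ and dropping the term $i = n+1$ (which is identically zero), \eqref{eq:typicalness} becomes
\[\pi(z) \;=\; \frac{1}{n+1} \sum_{i=1}^{n} \mathbb{1}_{f_i(z) > 0}, \qquad f_i(z) := E_i(z) - E_{n+1}(z).\]
Since each $f_i$ is quasi-concave, the strict superlevel set $I_i := \{z : f_i(z) > 0\}$ is convex and hence an interval. The two zeros $a_i \leq b_i$ of $f_i$ pin $I_i$ down to $(a_i, b_i)$: the chord inequality $f_i(\lambda a_i + (1-\lambda) b_i) \geq \min(f_i(a_i), f_i(b_i)) = 0$ yields $f_i \geq 0$ on $[a_i, b_i]$, whereas strict positivity at some $z^\star \notin [a_i, b_i]$ would, by convexity of $I_i$, drag one of $a_i$ or $b_i$ into the interior of $I_i$ and contradict the fact that $f_i$ vanishes there.

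Next I analyze the count $N(z) := \sum_{i=1}^{n} \mathbb{1}_{a_i < z < b_i} = (n+1)\,\pi(z)$, setting $a^\star := \max_i a_i$ and $b_\star := \min_i b_i$. I split $\mathbb{R}$ into three regimes. For $z < b_\star$ every $b_i \geq b_\star > z$, so $N(z) = |\{i : a_i < z\}|$ is non-decreasing in $z$; for $z > a^\star$ every $a_i \leq a^\star < z$, so $N(z) = |\{i : b_i > z\}|$ is non-increasing; and on the plateau $(a^\star, b_\star)$ both identities hold and $N \equiv n$. The separation hypothesis is precisely what allows these three regimes to cover $\mathbb{R}$ and patch into a single unimodal profile; without it, $N$ could exhibit several disjoint bumps. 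Since every upper level set of a unimodal real function is convex, $\pi$ is quasi-concave and $\Gamma^{(\alpha)}(x_{n+1}) = \{z : \pi(z) \geq \alpha\}$ is empty or an interval.

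The delicate point, and the one I would spend the most time on, is what happens when $a^\star = b_\star$: ties such as $a_j = b_i$ with $j \neq i$ can cause $N$ to drop at an isolated point (as a toy example, the two intervals $(1,2)$ and $(2,3)$ give $N(2) = 0$ sandwiched between $N \equiv 1$ on either side). The cleanest reading of the proposition therefore really wants the strict separation $a^\star < b_\star$, under which the middle plateau has non-empty interior and absorbs the maximum value $n$; in that regime, the three monotonicity pieces glue together on overlapping neighbourhoods, $N$ is unimodal on the nose, and the argument goes through without any further care.
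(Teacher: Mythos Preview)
The paper does not actually supply a proof of this proposition; it is stated and then immediately followed by a discussion of its scope, so there is nothing to compare against line by line. Your argument is the natural one and is correct: rewrite $\pi(z)=\frac{1}{n+1}\sum_{i=1}^{n}\mathbb{1}_{f_i(z)>0}$, identify each strict superlevel set $\{f_i>0\}$ with the open interval $(a_i,b_i)$ using quasi-concavity together with the ``exactly two zeros'' reading, and then use the separation hypothesis $a^\star\le b_\star$ to glue a non-decreasing count on $(-\infty,b_\star)$ to a non-increasing count on $(a^\star,\infty)$, with overlap $(a^\star,b_\star)$ where the count equals $n$. Unimodality of the count gives quasi-concavity of $\pi$, and the interval structure of $\Gamma^{(\alpha)}(x_{n+1})$ follows by taking a superlevel set.

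Your caveat about the degenerate case $a^\star=b_\star$ is a genuine observation, not excessive caution: your two-interval example $(1,2)$ and $(2,3)$ does satisfy the hypotheses as literally stated (non-strict inequality, two zeros each, quasi-concave $f_i$) and yet produces $\pi$ with a dip at $z=2$, so $\pi$ is not quasi-concave there. The clean fix is exactly the one you propose, namely to read the separation hypothesis as strict, $\max_i a_i<\min_i b_i$, so that the plateau has non-empty interior and the two monotone pieces overlap. This is a minor boundary imprecision in the statement rather than a flaw in your argument.
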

 
\begin{proof}
\eug{The function $\psi_i(z) := E_i(z) - E_{n+1}(z)$ is quasi-concave implies that its $0$-level set is convex, and $\{z:\psi(z) \leq 0\} = (-\infty, a_i] \cup [b_i, +\infty)$ or empty. Whence, $$F_{n+1}(E_{n+1}(z)) = \frac{1}{n+1}\sum_{i=1}^{n+1}\mathbb{1}_{(-\infty, a_i] \cup [b_i, +\infty)}(z) \enspace,$$ where, without loss of generality, we assume that all intervals are non-empty since they have a zero contribution to the sum. Now, the condition $\max_{i \in [n]} a_i \leq \min_{i \in [n]} b_i$ implies that the function $z \mapsto \pi(z) = 1 - F_{n+1}(E_{n+1}(z))$ is monotonically increasing in $(-\infty, a_{(n+1)}]$, and decreasing in $[b_{(1)}, +\infty)$. Hence the result.}
\begin{flushright}
    $\blacksquare$
\end{flushright}\end{proof}
 
\Cref{prop:quasiconcavity_test} generalizes \citep[Theorem 3.3]{Lei19} which provides a sufficient condition so that the Lasso conformal set is an interval. Unfortunately, such sufficient conditions are not testable for most problems. Indeed, it requires knowing all the zero crossing points of the function $z \mapsto E_i(z) - E_{n+1}(z)$ for all indices $i \in [n]$ which is as hard as computing the whole function $z \mapsto \pi(z)$.\\
 
\eug{In the literature, similar assumptions are made to obtain a conformal predictive distribution by enforcing a monotonicity on the score functions, see \citep{Vovk_Shen_Manokhin_Xie17}. We remind that even when the typicalness function $z \mapsto \pi(z)$ is not quasi-concave, our algorithm is still valid as long as the conformal set is an interval; which is a much weaker assumption than quasi-concavity. It would be interesting to study in more detail how to characterize the class of score function that systematically leads to a CP set being an interval. This is not necessarily obvious since it is easy to build an adversarial example. Indeed, consider a sinusoidal score function
$S(a, b) = |\sin(a - b)|$. It treats the data symmetrically across the instances, and then satisfies all the assumptions. The corresponding CP set is a union of an infinite number of intervals even for very simple regression models such as least squares. Which is a reminiscence of \texttt{No-Free-Lunch}: without any assumption, computations of a CP set is impossible, even with splitting approaches!}
 
\begin{figure*}
\center
\includegraphics[width=0.49\linewidth]{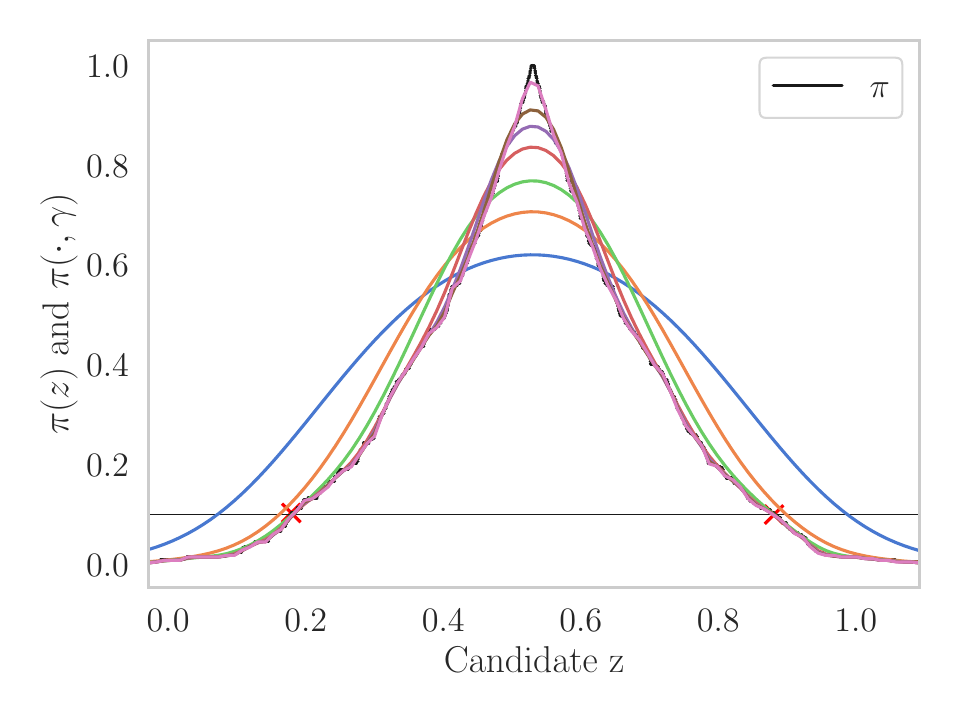}
\includegraphics[width=0.49\linewidth]{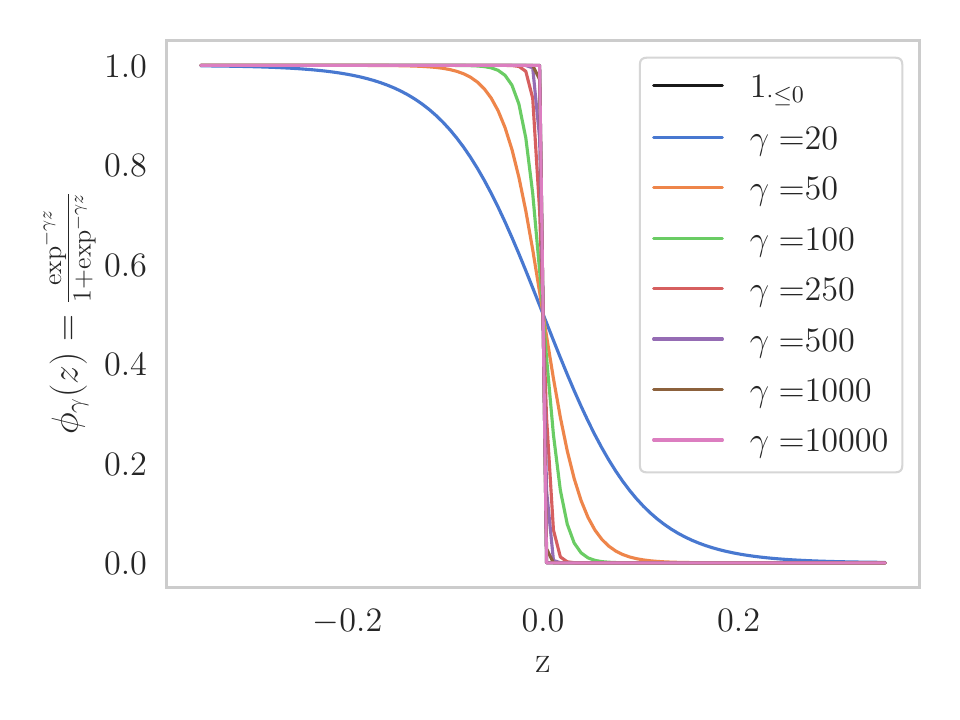}
\includegraphics[width=0.49\linewidth]{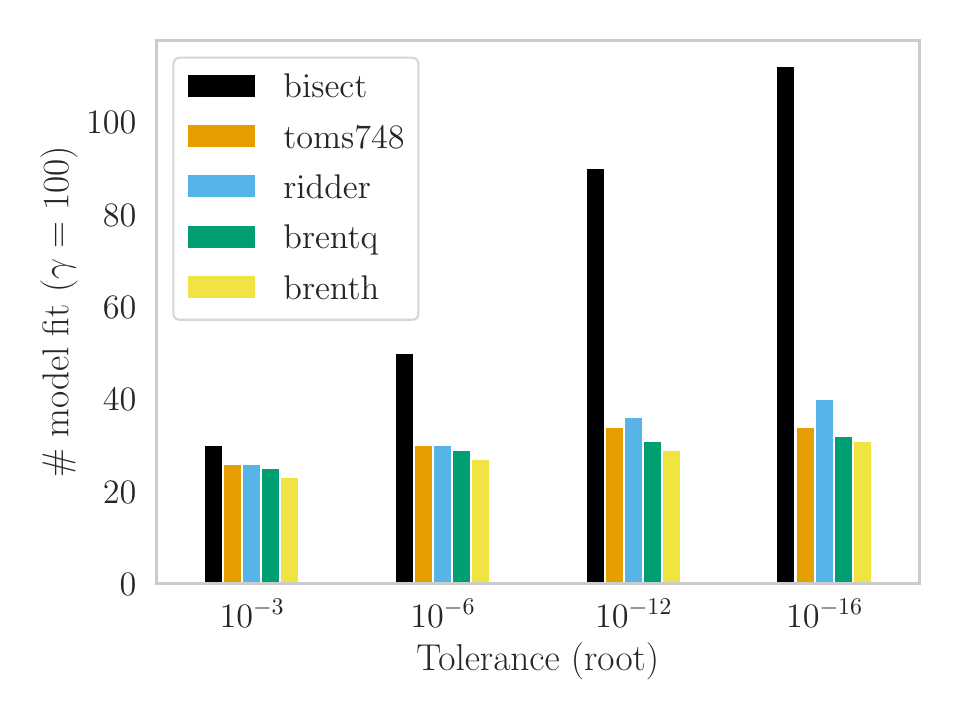}
\includegraphics[width=0.49\linewidth]{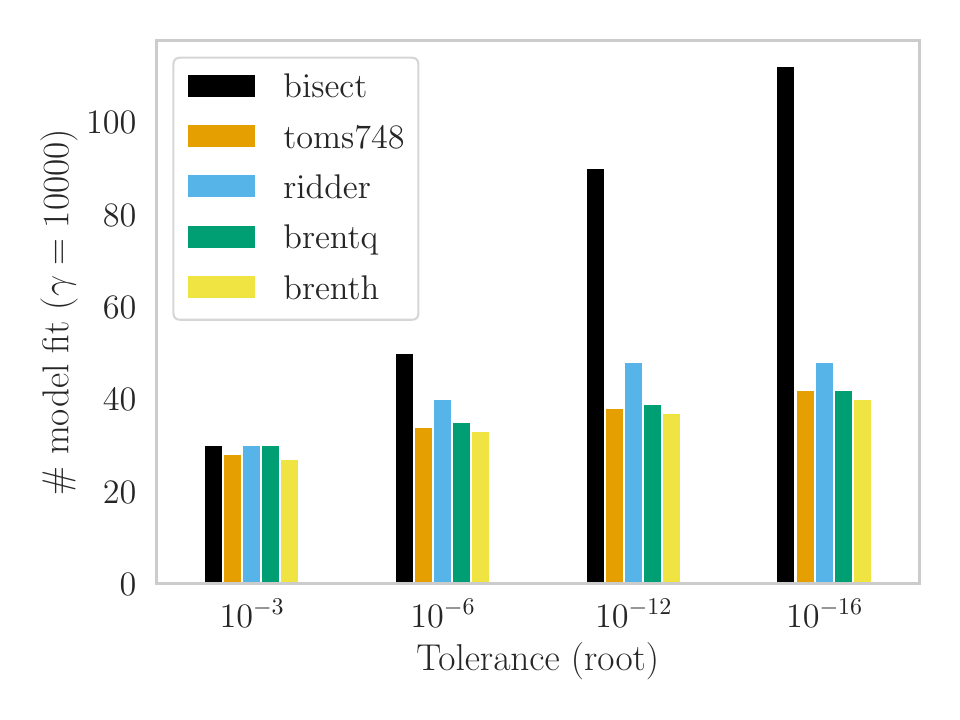}
\includegraphics[width=0.49\linewidth]{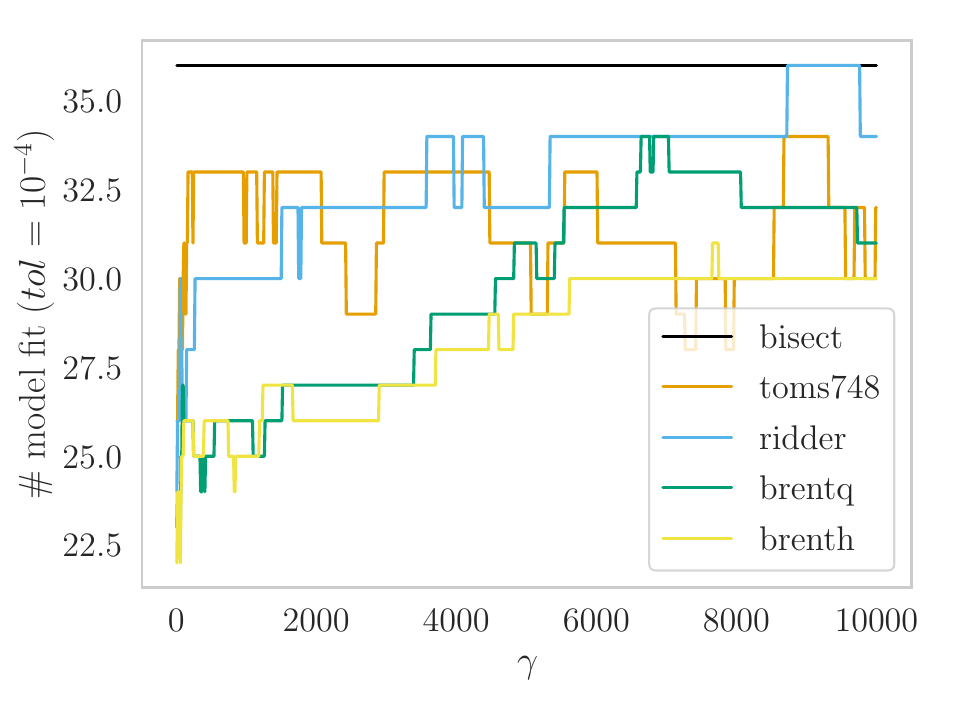}
\includegraphics[width=0.49\linewidth]{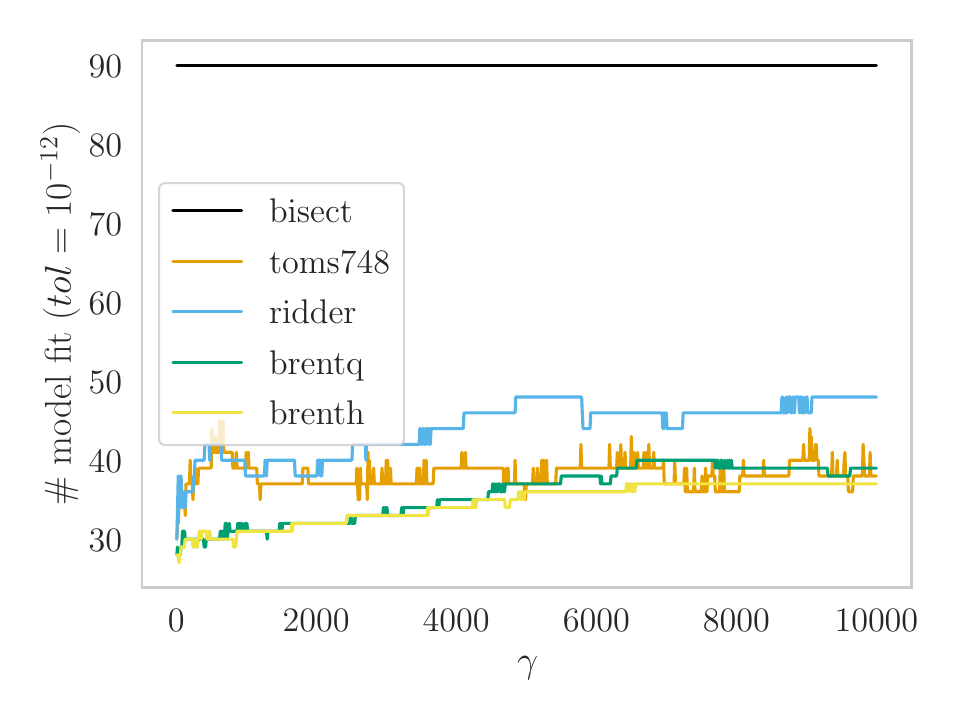}
\caption{Illustration of the smoothed conformal set with data generated from $\texttt{sklearn}$ as $X, y =$ \texttt{make\_regression}$(n=300, p=50)$. The smoothed typicalness function $\pi(\cdot, \gamma)$ is evaluated with several values for the hyperparameter $\gamma$. The underlying estimator is the ridge regressor with parameter $\lambda = p/\normin{\beta_{\mathrm{LS}}}^2$ where $\beta_{\mathrm{LS}}$ is the Least-squares estimator on the observed dataset $\mathcal{D}_n$.\looseness=-1 \label{fig:smoothed_CP}}
\end{figure*}
 
\subsection{Interpolated Conformal Prediction}
\label{subsec:Interpolated_Conformal_Prediction}
 
The full conformal prediction set is computationally expensive since it requires knowing exactly the map $z \mapsto \mu_z(\cdot)$. The splitting approach does not use all the data in the learning phase but is computationally efficient since it requires a single model fit. Alternatively, it was proposed in \citep{Lei_GSell_Rinaldo_Tibshirani_Wasserman18} to use an arbitrary discretization, and its theoretical analysis in \citep{Chen_Chun_Barber18} unfortunately failed to preserve the coverage guarantee. In this section, we argue that grid based strategy with an interpolation point of view, stands as an "in-between" strategy that exploits full data with a restricted computational time while preserving the coverage guarantee. We propose to compute a conformal prediction set based on an interpolation of the model fit map given a finite number of query points. The main insight is that the underlying model fit plays a minor role in the coverage guarantee; the only requirement is to be symmetric with respect to permutation of the data. As such, the model path $z \mapsto \hat\mu_z(\cdot)$ can be replaced by an interpolated map $z \mapsto \tilde\mu_z(\cdot)$ based on query points $z_1, \cdots, z_d$. It reads to a valid prediction set as long as the interpolation preserves the symmetry \footnote{Otherwise one can always perform a symmetrization\eg using a model parameter $\tilde\beta(z) = \frac{1}{(n+1)!} \sum_{\sigma \in \Sigma_{n+1}} \beta(w_{\sigma(1)}, \cdots, w_{\sigma(n)}, w_{\sigma(n+1)})$,
where $w_i = (x_i, y_i)$ if $i$ in $[n]$, $w_{n+1} = (x_{n+1}, z)$, and $\Sigma_{n+1}$ is the group of permutation of $[n+1]$.}.
For instance, one can rely on a piecewise linear interpolation

\begin{equation}\label{eq:piece_linear_interpolation}
\tilde\mu_{z} =
\begin{cases}
\frac{z_1 - z}{z_1 - z_{\min}} \hat\mu_{z_{\min}} + \frac{z_{\min} - z}{z_1 - z_{\min}} \hat\mu_{z_1} &\text{ if } z \leq z_{\min} \enspace, \\
\frac{z - z_{t+1}}{z_t - z_{t+1}} \hat\mu_{z_t} + \frac{z - z_{t}}{z_{t+1} - z_t} \hat\mu_{z_{t+1}} & \text{ if } z \in [z_t, z_{t+1}]\enspace, \\
\frac{z - z_d}{z_{\max} - z_d} \hat\mu_{z_{\max}} + \frac{z_{\max} - z}{z_{\max} - z_d} \hat\mu_{z_d} &\text{ if } z \geq z_{\max}\enspace,
\end{cases}
\end{equation}
where $\hat\mu_z(x)$ is a prediction map trained on the augmented dataset $\mathcal{D}_{n+1}(z)$ as in \Cref{eq:model_optimization}.

As before, one defines the instance-wise score functions
\begin{align*}
\forall i \in [n],\,\tilde E_{i}(z) = S(y_i,\, \tilde\mu_z(x_{i})) \text{ and }
\tilde E_{n+1}(z) = S(z,\, \tilde\mu_z(x_{n+1})) \enspace.
\end{align*}
The conformal set based on interpolated model fit is then defined as
\begin{align*}
\tilde \Gamma^{(\alpha)}(x_{n+1}) &= \{z: \tilde\pi(z) \geq \alpha \}, \text{ where }\\
\tilde\pi(z) &= 1 - \frac{1}{n+1}\sum_{i=1}^{n+1} \1_{\tilde E_i(z) \leq \tilde E_{n+1}(z)}\enspace.
\end{align*}
Since the map $z \mapsto \tilde\mu_z(\cdot)$ is (or can be made) symmetric, it is immediate to see that $\tilde \Gamma^{(\alpha)}(x_{n+1})$ is a valid conformal set following the same proof technique.\\

We remind that the conformal set can be highly concentrated around its midpoint, and the typicalness of most candidates is close to zero. Whence, we suggest restricting the query points around an estimate of the conformal set provided by a localization step. \eug{Also, it could be interesting to evaluate the performance of more sophisticated interpolation methods like splines  in order to have more symbiosis between the interpolation, and the smoothing of the rank function introduced in the next section. Nevertheless, the simplicity of linear interpolation allows an exact calculation of the conformal prediction set because one can easily enumerate the change points of the rank function. This is not necessarily preserved with higher order interpolation, and requires further investigation.}
 
\begin{remark}[Interpolation of the typicalness map]
Given the query points, and their corresponding typicalness $(z_1, \pi(z_1)), \cdots, (z_d, \pi(z_d))$, one can also directly learn a function that approximate the typicalness $z \mapsto \pi(z)$. However, in this case, we could not establish the theoretical coverage guarantee of this method. Moreover, when the conformal set is highly localized, most of the $\pi(z_i)$ might be close to zero leading to a flat, and poorly interpolated typicalness map.
\end{remark}
 
Previous discretization approaches did not preserve the coverage guarantee or did it at expensive cost by approximating a model fit path on a wide range, with a high precision at every step, and was restricted to convex problems. The interpolation point of view that we provided allows us to compute a valid conformal prediction set with arbitrary discretization without loss in the coverage guarantee, and without restriction to convex problems. Also note that depending on the interpolation used, there is no need to assume that the conformal prediction set is an interval. Indeed, in the case below of piecewise linear interpolation, one can easily enumerate all the change points of the conformal function as in homotopy methods. However, in general we also recommend the use of \texttt{rootCP}. Finally, note that in the case of the ridge estimator (which is linear in $z$), the exact conformal prediction coincides with that of the interpolation.

\subsection{Smoothed Conformal Prediction}
 
Conformal prediction sets rely on rank computations. The latter function is piecewise constant, and has no useful first order information in the sense that it is either null or undefined. We propose a smooth approximation of the typicalness function to reduce the number of query points. In addition to exchangeability, we merely use the fact that $F_{n+1}$ is increasing, and the linearity of the sum to obtain the coverage guarantee. Likewise, one should be able to replace $\mathbb{1}_{E_i - z \leq 0}$ with a continuously differentiable, and increasing function $\phi_{\gamma}(E_i - z)$. Hence, replacing the function $z \mapsto F_{n+1}(E_{n+1}(z))$ by a smoother one allows the use of more efficient gradient or quasi-Newton-based root finding methods. We further investigate the influence of such smoothing on the coverage guarantee. In practice, we simply choose the sigmoid function $\phi_{\gamma}(x) = \frac{ \mathrm{e}^{-\gamma x} }{1 + \mathrm{e}^{-\gamma x}}$ as in \citep{Qin_Liu_Li10}. We have
\begin{align*}
\mathrm{Rank}(u_j) := \sum_{i=1}^{n+1} \mathbb{1}_{u_i - u_j\leq 0}
\approx \sum_{i=1}^{n+1} \phi_{\gamma}(u_i - u_j) =: \mathrm{sRank}(u_j, \gamma) \enspace.
\end{align*}
The main advantage is that the map $\mathrm{sRank}(\cdot, \gamma)$ improves the regularity of $\mathrm{Rank}(\cdot)$, and allows faster convergence.

The smooth approximation of the typicalness function is then defined as
\begin{align*}
\pi(z) &\approx \pi(z, \gamma) := 1 - \frac{1}{n+1} \mathrm{sRank}(E_{n+1}(z), \gamma) \enspace,
\end{align*}
and the smoothed conformal prediction set (illustrated in \Cref{fig:smoothed_CP}) as
\begin{equation*}
\Gamma^{(\alpha, \gamma)}(x_{n+1}) = \{z: \pi(z, \gamma) > \alpha \} \enspace.
\end{equation*}
Now computing an approximation of the conformal prediction set is equivalent to finding the smallest, and largest solution of the equation $\pi(z, \gamma) = \alpha$ which is often easier to solve than $\pi(z) = \alpha$. Using different root-finding solvers, we illustrate the computational advantages by displaying the reduction of the number of model fit in \Cref{fig:smoothed_CP}.
 
\begin{remark}[Gradient Based Solvers]
When, for any feature $x$, the regression map $z \mapsto \mu_z(x)$ is differentiable, the solutions of equation $\pi(z, \gamma) = \alpha$ could be approximated with more efficient gradient based root-finding algorithm. However, the function $z \mapsto \pi(z, \gamma)$ is mostly flat except at a tiny vicinity of the conformal set which makes the convergence difficult unless a good initialization is found. One could also rely on a regularized version by minimizing $(\pi(z, \gamma) - \alpha)^2 + \tau z^2$ which requires a proper tuning of the hyper parameter $\tau$. Both of these strategies turn out to be less stable, and need further investigations.
\end{remark}
 
\begin{remark} \eug{Here it is clear that the term \textit{"smooth"} refers to the differentiability of the conformity function; and was introduced for computational reasons. This is not to be confused with the \textit{"Smoothed Conformal Predictors"} introduced in \citep[Page 27]{Vovk_Gammerman_Shafer05} where the borderline cases $E_i(z) = E_{n+1}(z)$ are treated more carefully\ie \textit{smoothly} penalized with a random parameter between $0$, and $1$ instead of increasing the rank with $1$. This randomization essentially breaks the ties to ensure an exact coverage guarantee. All the computational methods we introduce in this article apply immediately to this case.}
\end{remark}

We analyze the statistical consequences of using a continuous version of the indicator function. We recall the definition of the smoothed version of the empirical cumulative distribution, and empirical quantile:
\begin{align*}
&\tilde F_{n+1}(z) = \frac{1}{n+1}\sum_{i=1}^{n+1} \phi_\gamma(E_i - z),
&\tilde Q_{n+1}(\alpha) = \inf\{z \in \bbR: \tilde F_{n+1}(z) \geq \alpha\} \enspace.
\end{align*}

\begin{proposition}[Coverage guarantee of the smooth relaxation]\label{prop:smooth_coverage}
For a sequence of exchangeable random variable $E_1, \ldots, E_{n+1}$, it holds for any $\tilde\alpha$ in $(0, 1)$,
$$\mathbb{P}(\tilde F_{n+1}(E_{n+1}) \leq \tilde\alpha) \geq \tilde\alpha - \Delta(\gamma) \enspace,$$ where
 $\Delta(\gamma) = \sup_{x}(\phi_{\gamma} - \mathbb{1}_{\cdot \leq 0})(x)$.
\end{proposition}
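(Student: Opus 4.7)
The plan is to mimic the proof of \Cref{lm:empirical_repartition_lemma} while carrying a pointwise slack term equal to $\Delta(\gamma)$. Three ingredients will be used: monotonicity and continuity of the smoothed empirical cdf $\tilde F_{n+1}$, exchangeability of $(E_1,\ldots,E_{n+1})$, and the defining pointwise inequality $\phi_{\gamma}(x) \leq \mathbb{1}_{x \leq 0} + \Delta(\gamma)$ that follows directly from the definition of $\Delta(\gamma)$.

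First I would observe that $z \mapsto \tilde F_{n+1}(z)$ is continuous and non-decreasing: each summand $z \mapsto \phi_\gamma(E_i - z)$ inherits these properties because $\phi_\gamma$ is a continuous, decreasing sigmoid, and averaging preserves both. As a consequence, the infimum defining $\tilde Q_{n+1}(\tilde\alpha)$ is attained and $\tilde F_{n+1}(\tilde Q_{n+1}(\tilde\alpha)) \geq \tilde\alpha$, and the event inclusion
$$\{E_{n+1} \leq \tilde Q_{n+1}(\tilde\alpha)\} \;\subseteq\; \{\tilde F_{n+1}(E_{n+1}) \leq \tilde\alpha\}$$
holds, giving $\mathbb{P}(\tilde F_{n+1}(E_{n+1}) \leq \tilde\alpha) \geq \mathbb{P}(E_{n+1} \leq \tilde Q_{n+1}(\tilde\alpha))$. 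Next I would invoke exchangeability: since $\tilde Q_{n+1}(\tilde\alpha)$ is a permutation-symmetric functional of $(E_1,\ldots,E_{n+1})$, the probabilities $\mathbb{P}(E_i \leq \tilde Q_{n+1}(\tilde\alpha))$ agree for every $i \in [n+1]$, and averaging over $i$ yields
$$\mathbb{P}(E_{n+1} \leq \tilde Q_{n+1}(\tilde\alpha)) \;=\; \mathbb{E}\!\left[\frac{1}{n+1}\sum_{i=1}^{n+1} \mathbb{1}_{E_i \leq \tilde Q_{n+1}(\tilde\alpha)}\right] \;=\; \mathbb{E}[F_{n+1}(\tilde Q_{n+1}(\tilde\alpha))].$$

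Finally, I would apply the pointwise bound $\phi_\gamma(x) \leq \mathbb{1}_{x \leq 0} + \Delta(\gamma)$ to each summand of $\tilde F_{n+1}$ and average, obtaining the uniform relation $\tilde F_{n+1}(z) \leq F_{n+1}(z) + \Delta(\gamma)$ valid for all $z$. Plugging $z = \tilde Q_{n+1}(\tilde\alpha)$ and combining with $\tilde F_{n+1}(\tilde Q_{n+1}(\tilde\alpha)) \geq \tilde\alpha$ gives $F_{n+1}(\tilde Q_{n+1}(\tilde\alpha)) \geq \tilde\alpha - \Delta(\gamma)$; taking expectations and chaining through the two earlier inequalities concludes the proof. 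The only technically delicate step is the first one, where the monotonicity and continuity of $\tilde F_{n+1}$ as a random function must be invoked carefully so that the quantile is actually attained; the rest is elementary algebra and the bound degrades linearly in the uniform perturbation $\Delta(\gamma)$, exactly as one would expect.
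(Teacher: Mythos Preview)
Your argument is correct and follows essentially the same route as the paper's own proof: both start from $\tilde\alpha \leq \tilde F_{n+1}(\tilde Q_{n+1}(\tilde\alpha))$, split $\phi_\gamma$ into $\mathbb{1}_{\cdot \leq 0}$ plus a remainder bounded by $\Delta(\gamma)$, pass to the event $\{\tilde F_{n+1}(E_i)\leq \tilde\alpha\}$ via monotonicity, and average using exchangeability. The only difference is cosmetic ordering---the paper carries the pointwise inequality through the sum before taking expectation, whereas you take probabilities first and then apply the uniform bound $\tilde F_{n+1}\leq F_{n+1}+\Delta(\gamma)$---and you are more explicit about the continuity needed for the quantile to be attained, which the paper leaves implicit.
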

 
\begin{proof}
By definition of $\tilde F_{n+1}$, and $\tilde Q_{n+1}$, we have
\begin{align*}
\tilde\alpha \leq \tilde F_{n+1}(\tilde Q_{n+1}(\tilde\alpha))
&= \frac{1}{n+1}\sum_{i=1}^{n+1} \phi_{\gamma}(E_i - \tilde Q_{n+1}(\tilde\alpha))\\
&= \frac{1}{n+1}\sum_{i=1}^{n+1} \mathbb{1}_{E_i \leq \tilde Q_{n+1}(\tilde\alpha)} + (\phi_{\gamma} - \mathbb{1}_{\cdot \leq 0})(E_i - \tilde Q_{n+1}(\tilde\alpha))\\
&\leq \frac{1}{n+1}\sum_{i=1}^{n+1} \mathbb{1}_{\tilde F_{n+1}(E_i) \leq \tilde\alpha} + \Delta(\gamma) \enspace.
\end{align*}
We conclude by taking the expectation on both sides along with exchangeability.
\begin{flushright}
    $\blacksquare$
\end{flushright}\end{proof}
 
In order to display a probabilistic statement, one needs to maintain the indicator function when defining the typicalness function. Replacing it with a continuous version will distort the coverage guarantee as described in \Cref{prop:smooth_coverage}. To obtain a well $\alpha$-calibrated confidence set, one must take into account such approximation error by choosing $(\tilde\alpha, \gamma)$ such that
\begin{equation*}
\tilde\alpha - \Delta(\gamma) \geq \alpha \enspace.
\end{equation*}
If $\tilde \alpha$ is fixed, one needs to be careful when choosing $\gamma$. Otherwise, we obtain a vacuous upper bound, and all the coverage guarantee is lost. Meanwhile, if $\gamma$ is chosen such that $\phi_{\gamma}$ is a lower approximation of the indicator function, then $\tilde\alpha$ can be taken as $\alpha$, and there is no calibration loss. However, when $\Delta(\gamma)$ is close to zero, $\phi_{\gamma}$ will be flat almost everywhere, and we will not get useful first order information. This brings a \emph{trade-off} between number of model fitting (which influences the computational time), and efficiency\ie length of the interval (wider $\tilde\alpha$-level set).
 
\paragraph{Building a gap.}
 
To finely assess how the vanilla conformal, and smoothed conformal set can be related in practice, one can simply design both a lower, and upper approximation of the indicator function\ie $\phi_{\gamma}^{+}$ and $\phi_{\gamma}^{-}$.

In that case, it is easy to see that
\begin{equation*}
\ell_{\gamma}^{+} \leq \ell_{\alpha}(x_{n+1}) \leq \ell_{\gamma}^{-} \text{ and } u_{\gamma}^{-} \leq u_{\alpha}(x_{n+1}) \leq u_{\gamma}^{+} \enspace,
\end{equation*}
which is equivalent to
$$
\Gamma_{\gamma}^{(\alpha, -)}(x_{n+1}) \subset \Gamma^{(\alpha)}(x_{n+1}) \subset \Gamma_{\gamma}^{(\alpha, +)}(x_{n+1}) \enspace.
$$
The overall complexity is moderately expanded (we now need to compute two different conformal prediction sets), and not too time consuming as long as the underlying model fit is reasonably computable.

\section{Experiments}
\label{sec:experiments}
 
\begin{figure}
\centering
\subfigure{\includegraphics[width=0.49\textwidth]{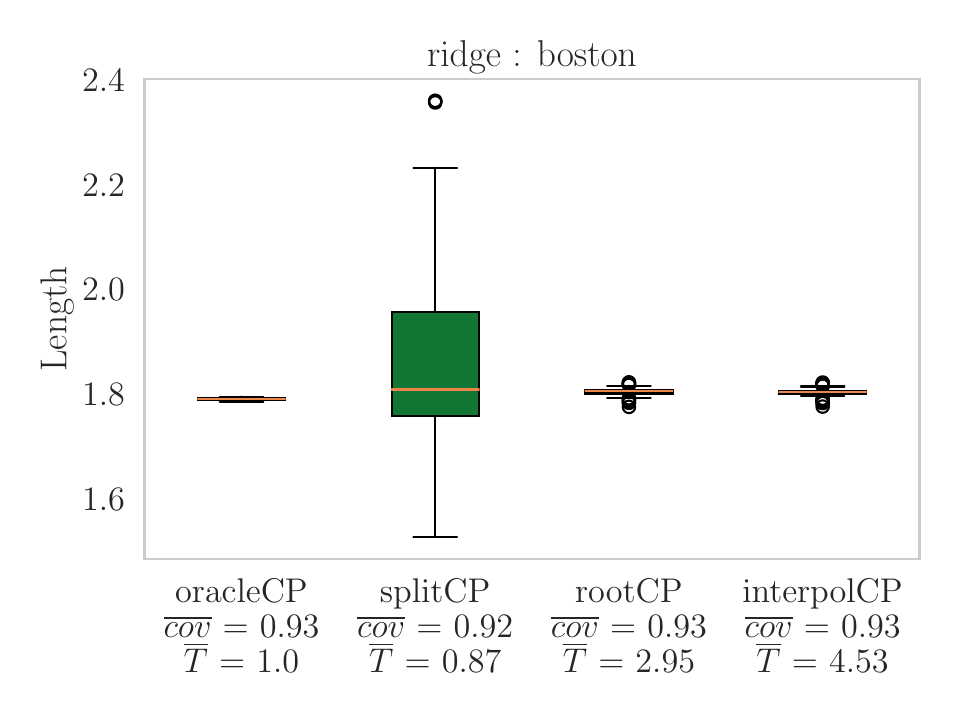}}
\subfigure{\includegraphics[width=0.49\textwidth]{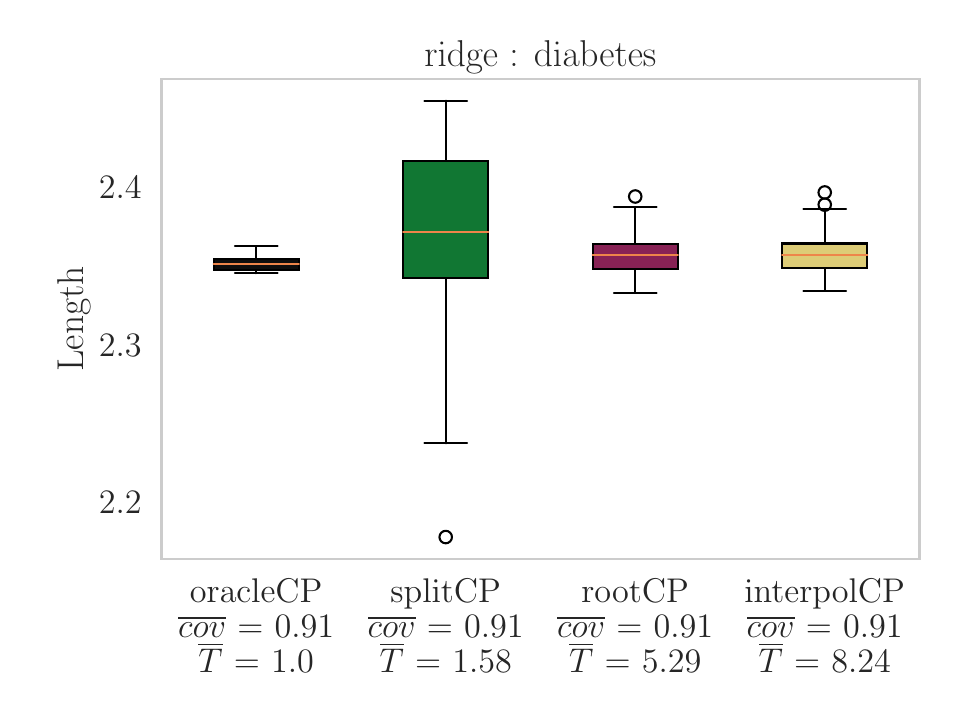}}
\subfigure{\includegraphics[width=0.49\textwidth]{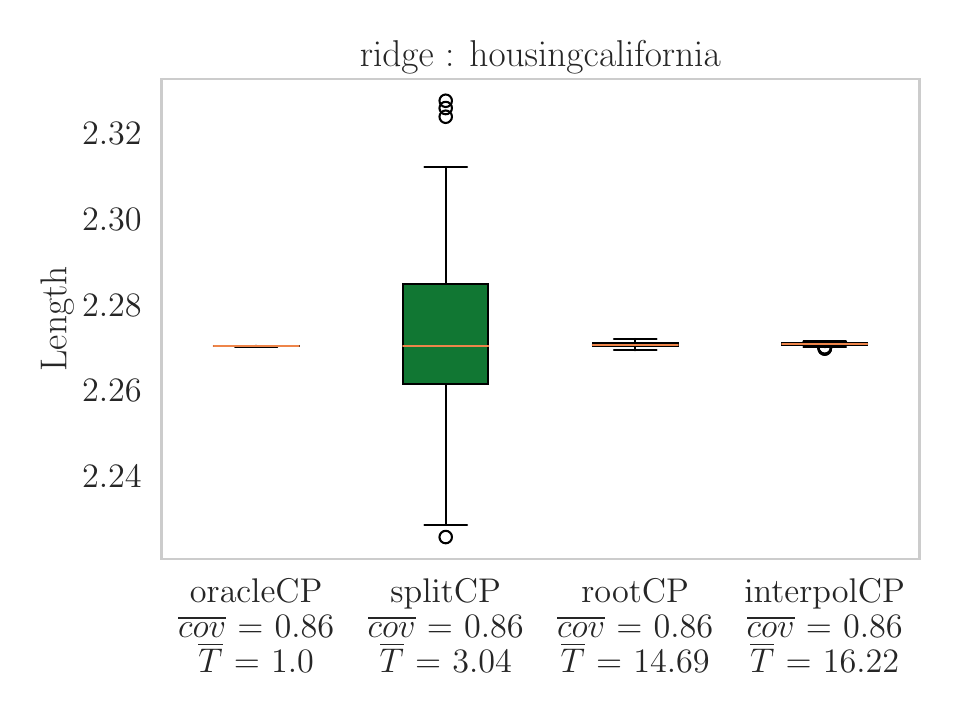}}
\subfigure{\includegraphics[width=0.49\textwidth]{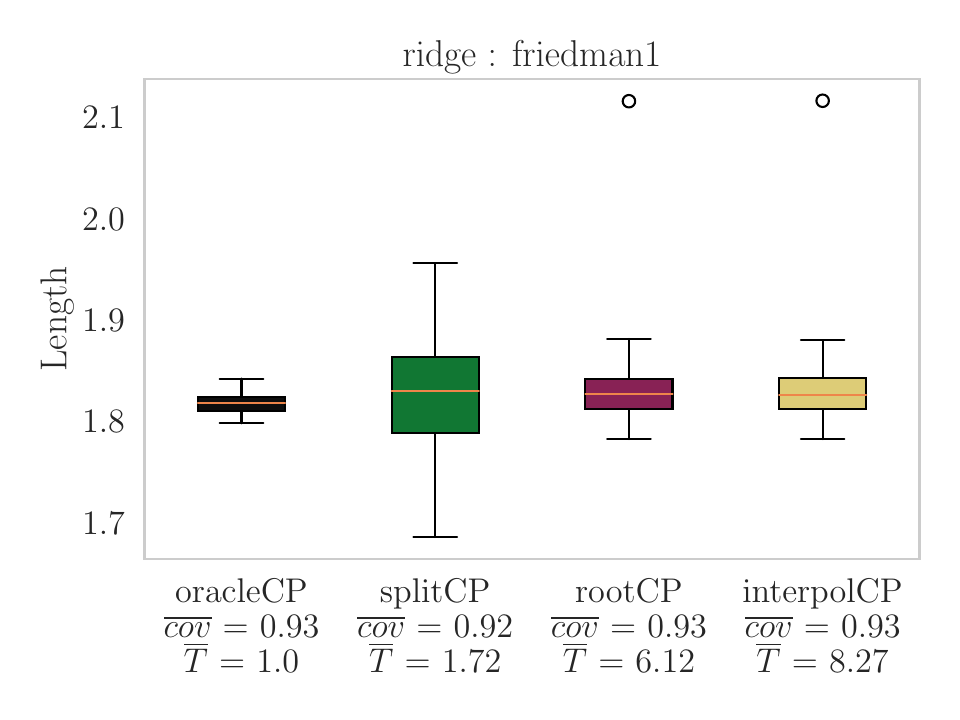}}
\caption{Benchmarking conformal sets for ridge regression models on real datasets. We display the lengths of the confidence sets over $100$ random permutation of the data. We denoted $\overline{cov}$ the average coverage, and $\overline{T}$ the average computational time normalized with the average time for computing \texttt{oracleCP} which requires a single model fit on the whole data. \label{fig:ridge_benchmarks}\looseness=-1}
\end{figure}
\begin{figure}
\subfigure{\includegraphics[width=0.49\textwidth]{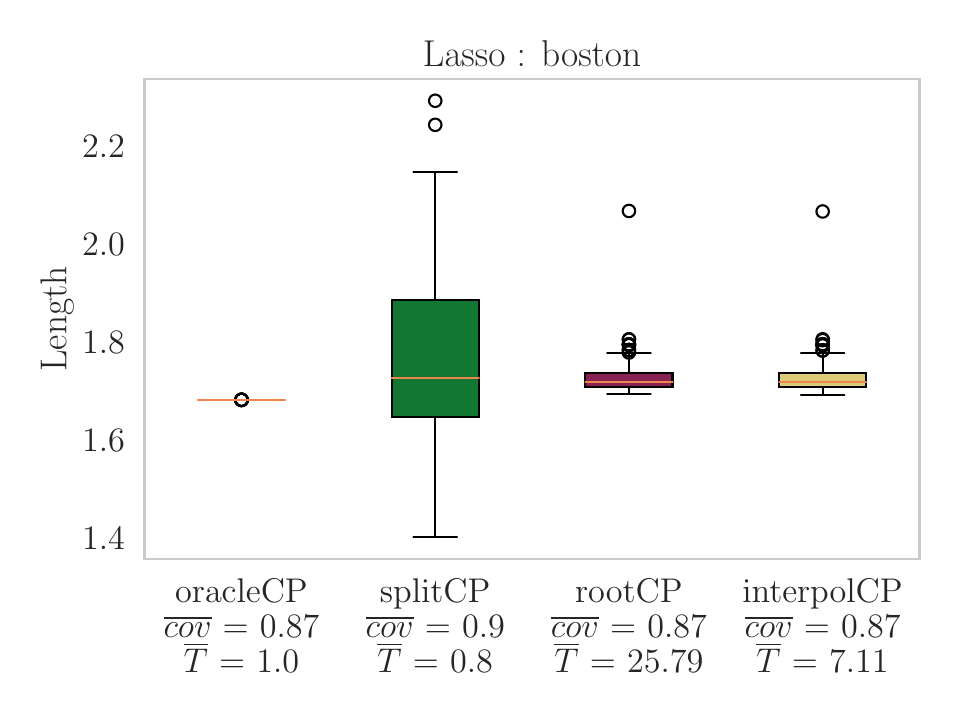}}
\subfigure{\includegraphics[width=0.49\textwidth]{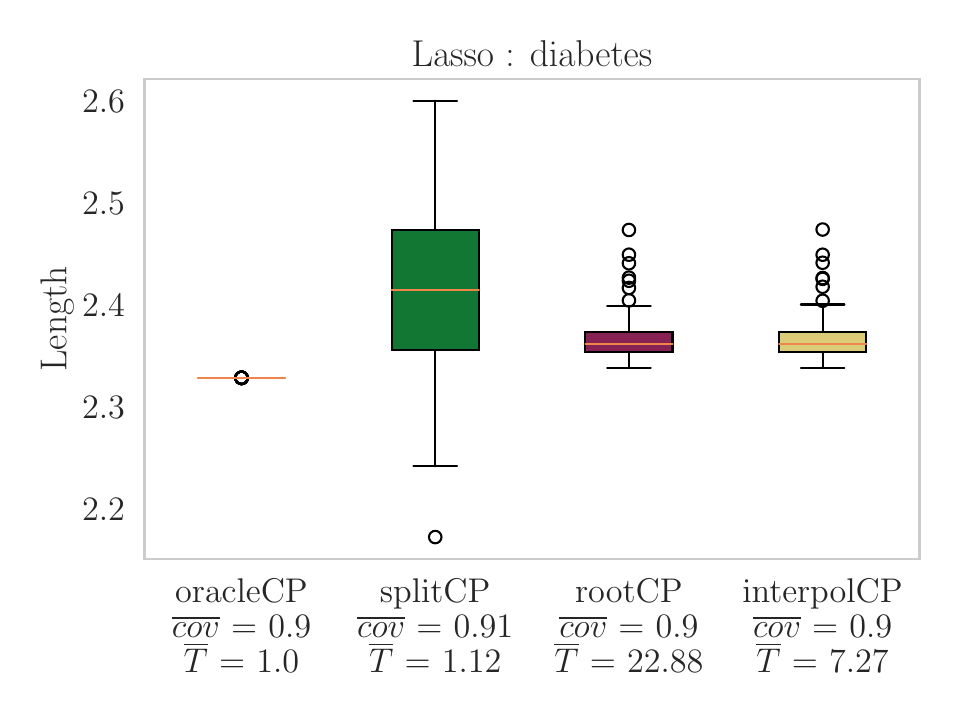}}
\subfigure{\includegraphics[width=0.49\textwidth]{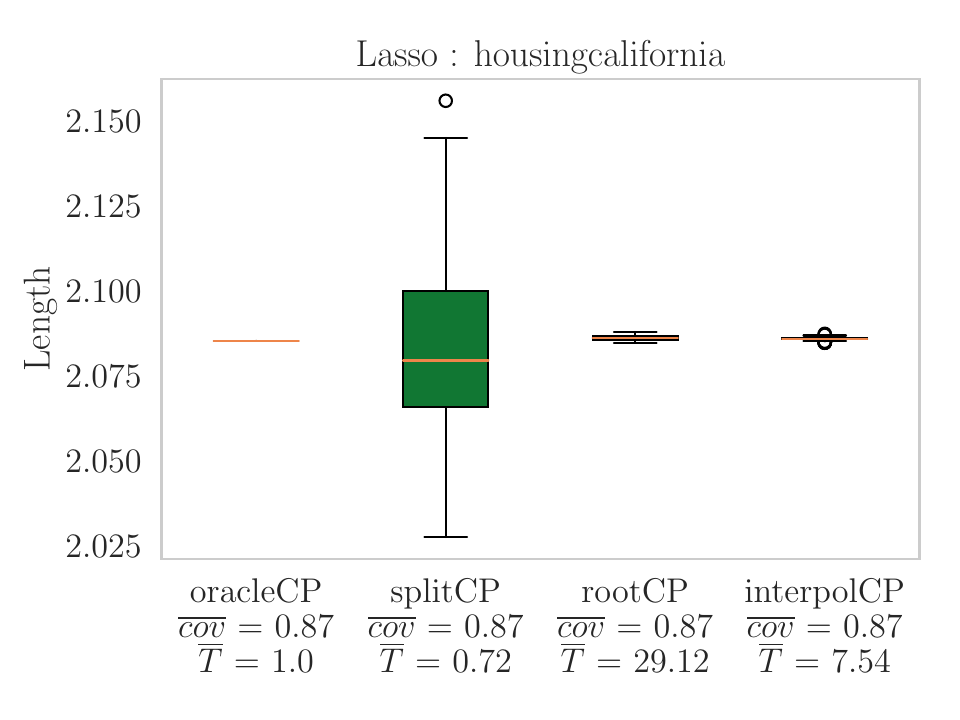}}
\subfigure{\includegraphics[width=0.49\textwidth]{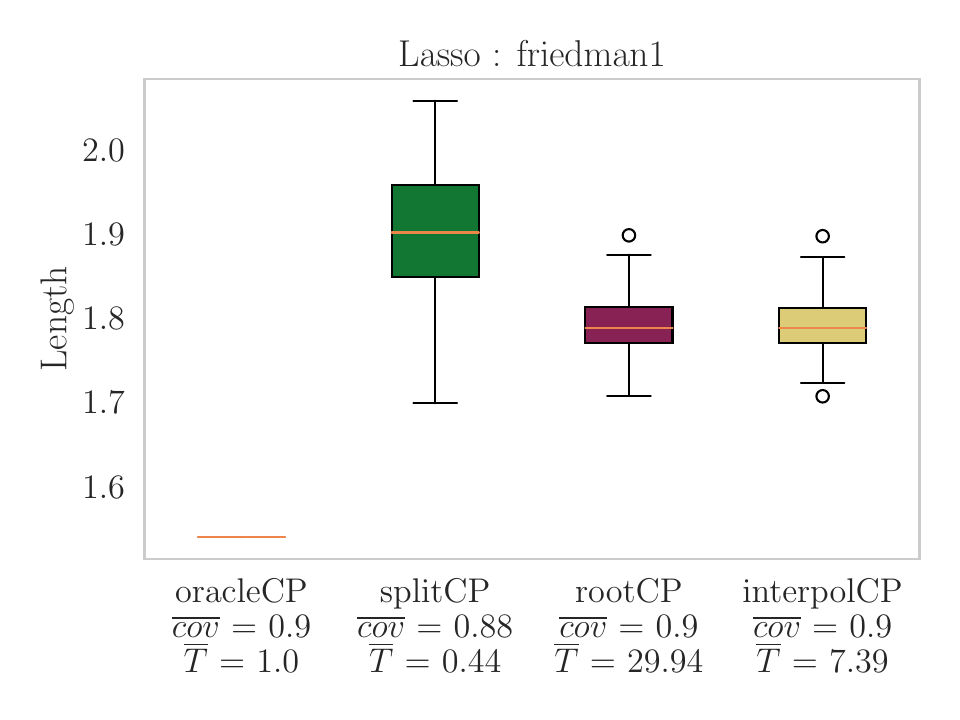}}
\caption{Benchmarking conformal sets for Lasso regression models on real datasets. We display the lengths of the confidence sets over $100$ random permutation of the data. We denoted $\overline{cov}$ the average coverage, and $\overline{T}$ the average computational time normalized with the average time for computing \texttt{oracleCP} which requires a single model fit on the whole data.\label{fig:Lasso_benchmarks} \looseness=-1}
\end{figure}
%
\begin{figure}
\subfigure{\includegraphics[width=0.49\textwidth]{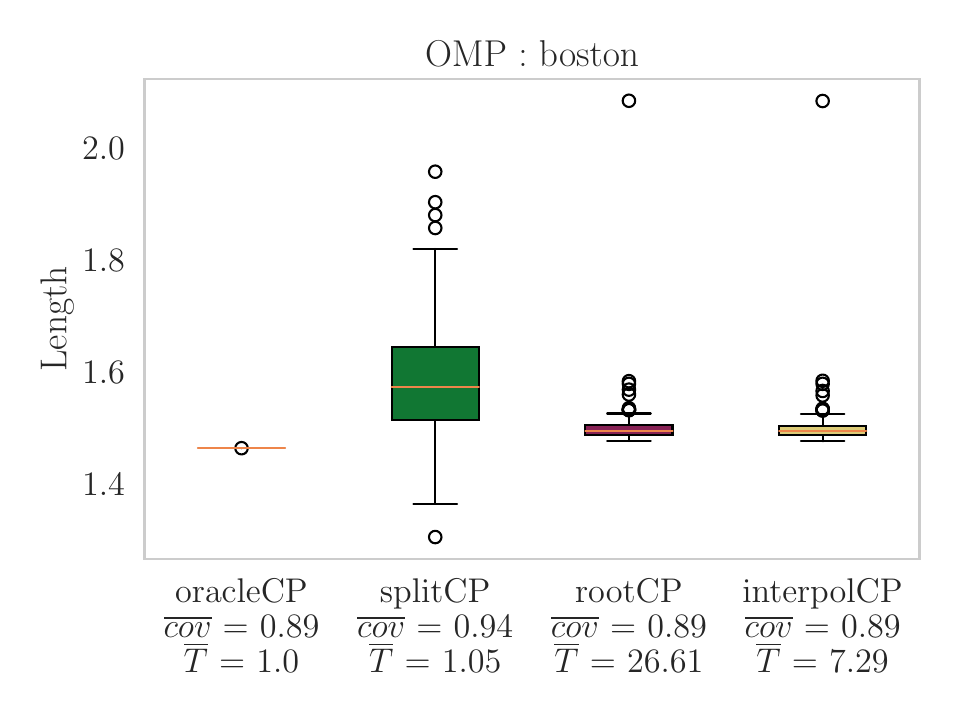}}
\subfigure{\includegraphics[width=0.49\textwidth]{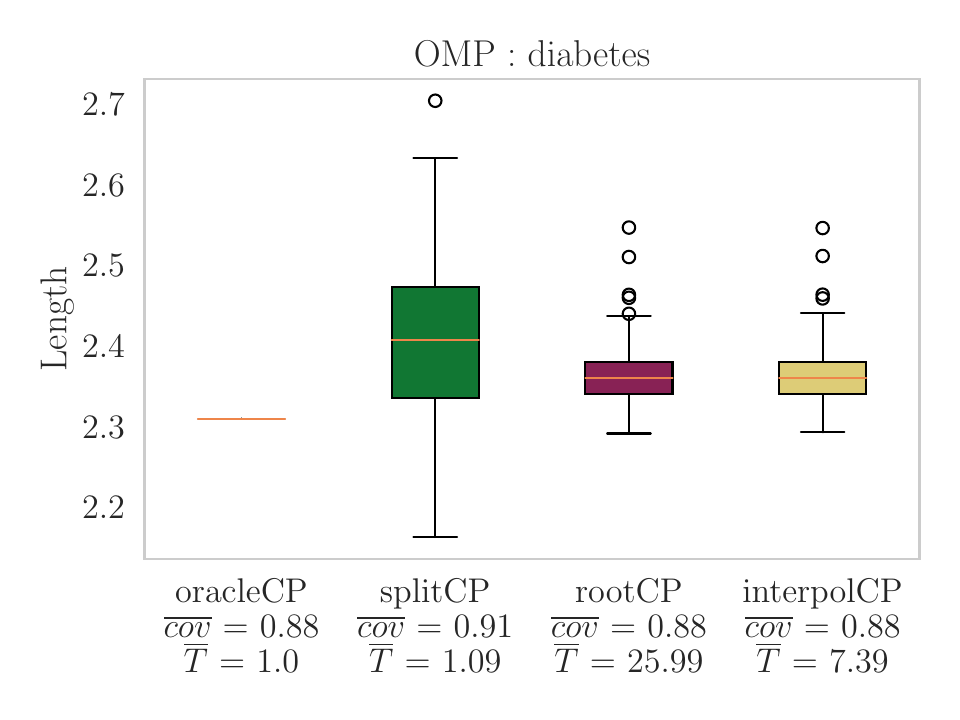}}
\subfigure{\includegraphics[width=0.49\textwidth]{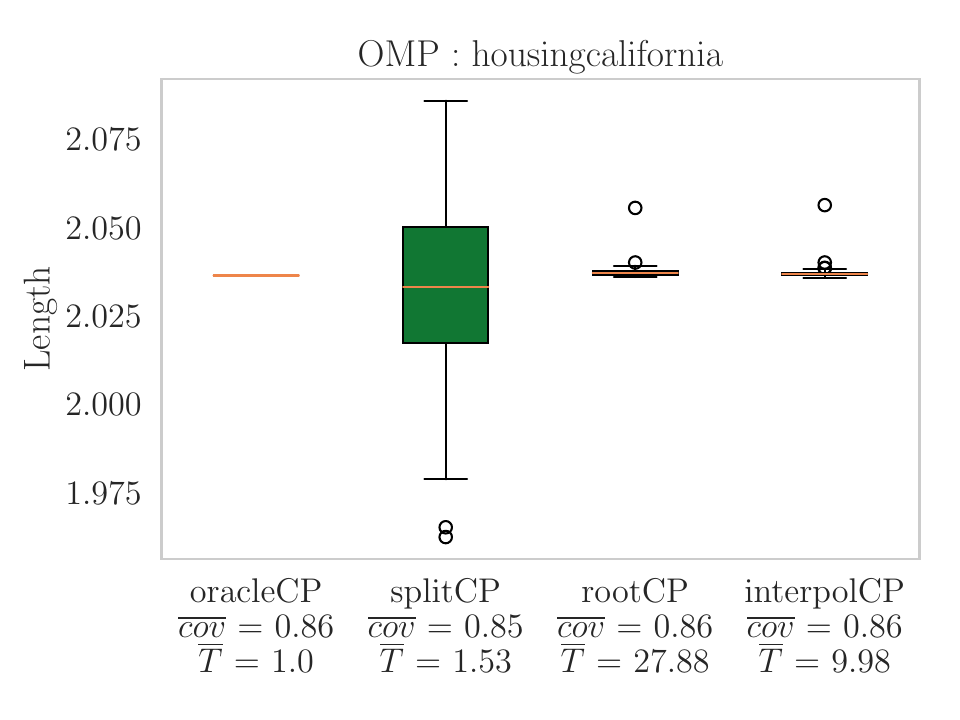}}
\subfigure{\includegraphics[width=0.49\textwidth]{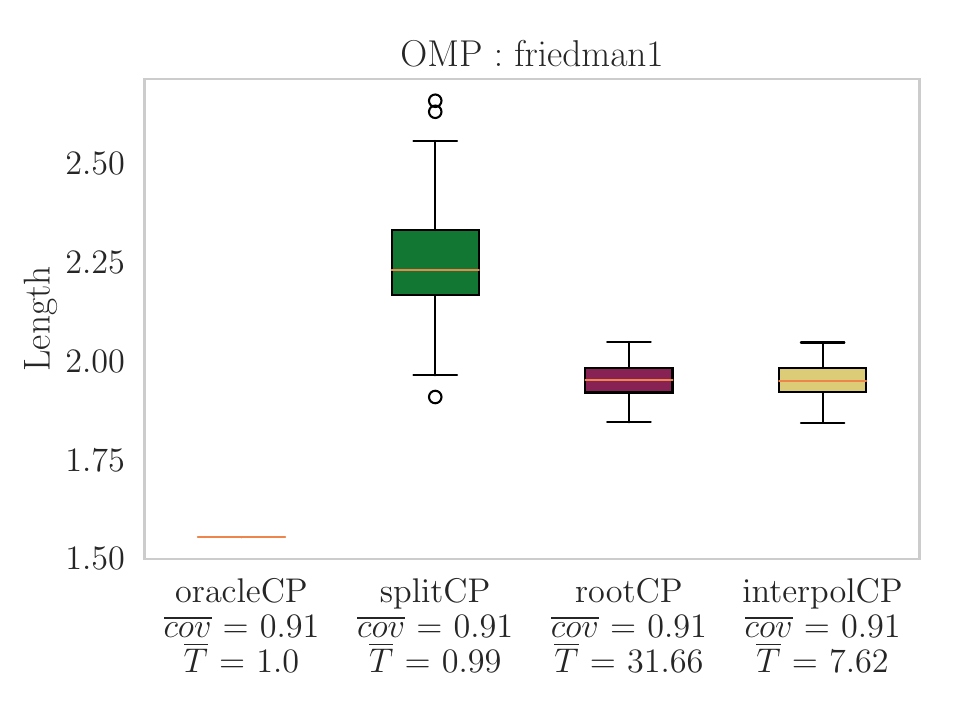}}
\caption{Benchmarking conformal sets for orthogonal Matching Pursuit regression models on real datasets. We display the lengths of the confidence sets over $100$ random permutation of the data. We denoted $\overline{cov}$ the average coverage, and $\overline{T}$ the average computational time normalized with the average time for computing \texttt{oracleCP} which requires a single model fit on the whole data.\label{fig:OMP_benchmarks} \looseness=-1}
\end{figure}
\begin{figure}
\subfigure{\includegraphics[width=0.49\textwidth]{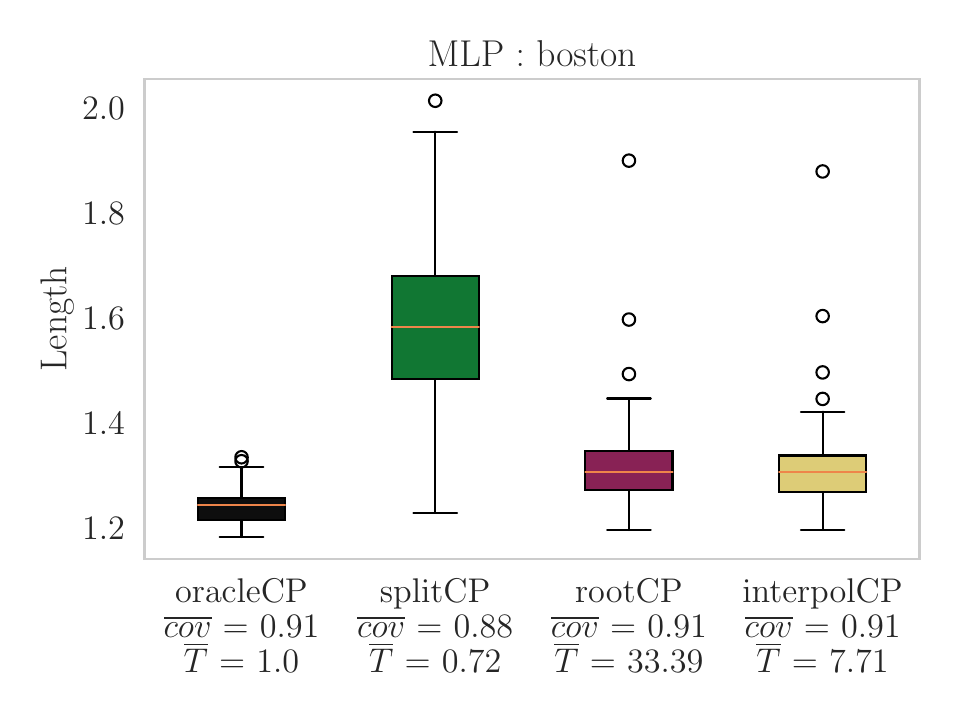}}
\subfigure{\includegraphics[width=0.49\textwidth]{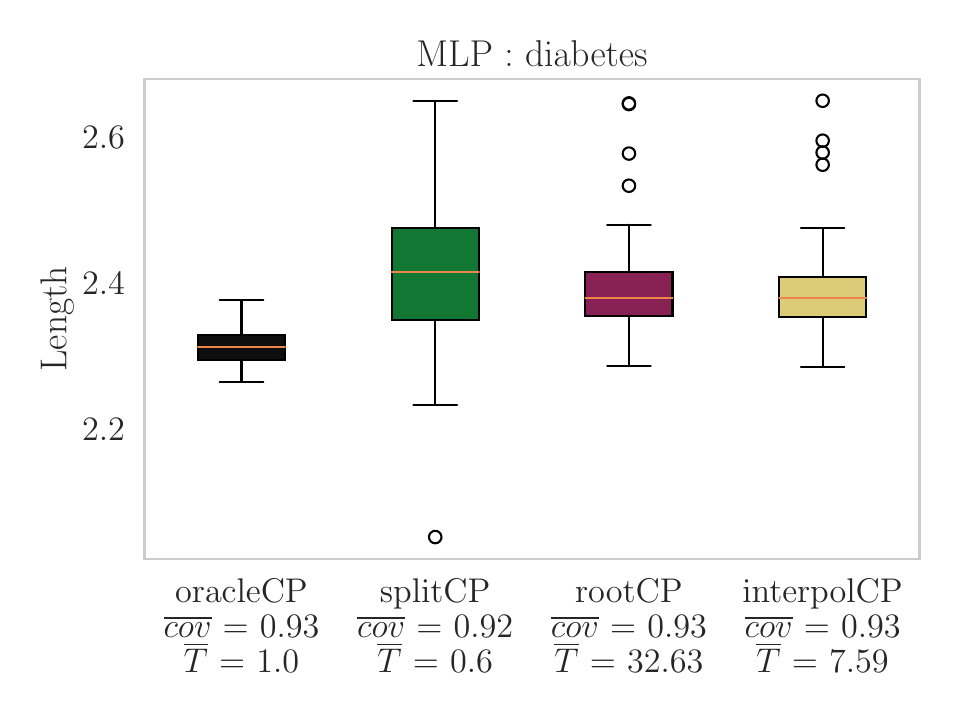}}
\subfigure{\includegraphics[width=0.49\textwidth]{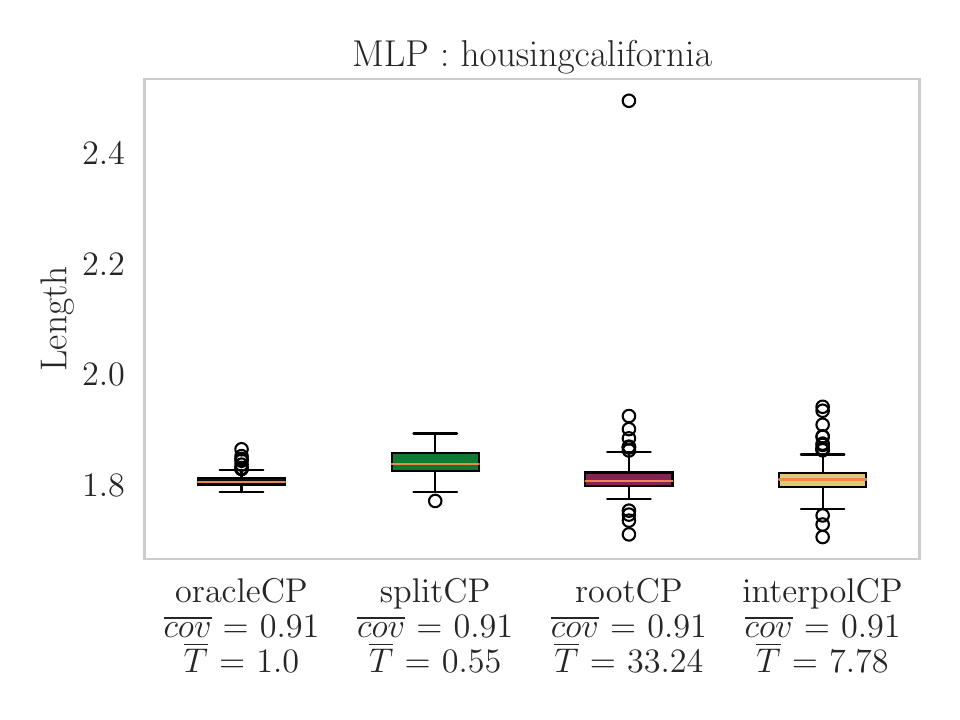}}
\subfigure{\includegraphics[width=0.49\textwidth]{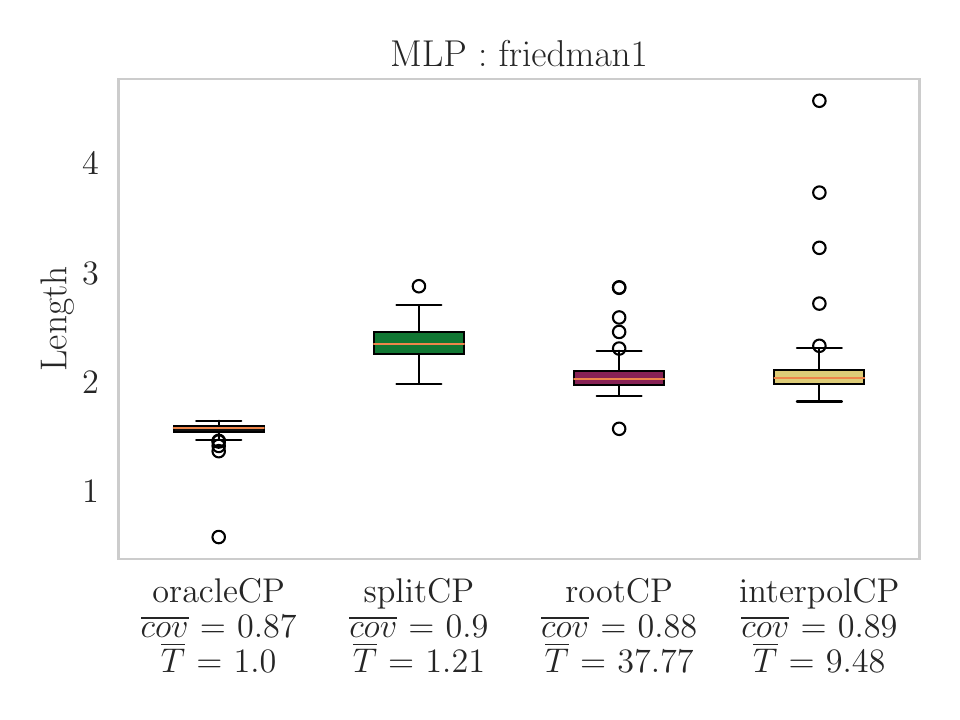}}
\caption{Benchmarking conformal sets for Multi-layer Perceptron regression models on real datasets. We display the lengths of the confidence sets over $100$ random permutation of the data. We denoted $\overline{cov}$ the average coverage, and $\overline{T}$ the average computational time normalized with the average time for computing \texttt{oracleCP} which requires a single model fit on the whole data. \label{fig:MLP_benchmarks}\looseness=-1}
\end{figure}
%
\begin{figure}
\subfigure{\includegraphics[width=0.49\textwidth]{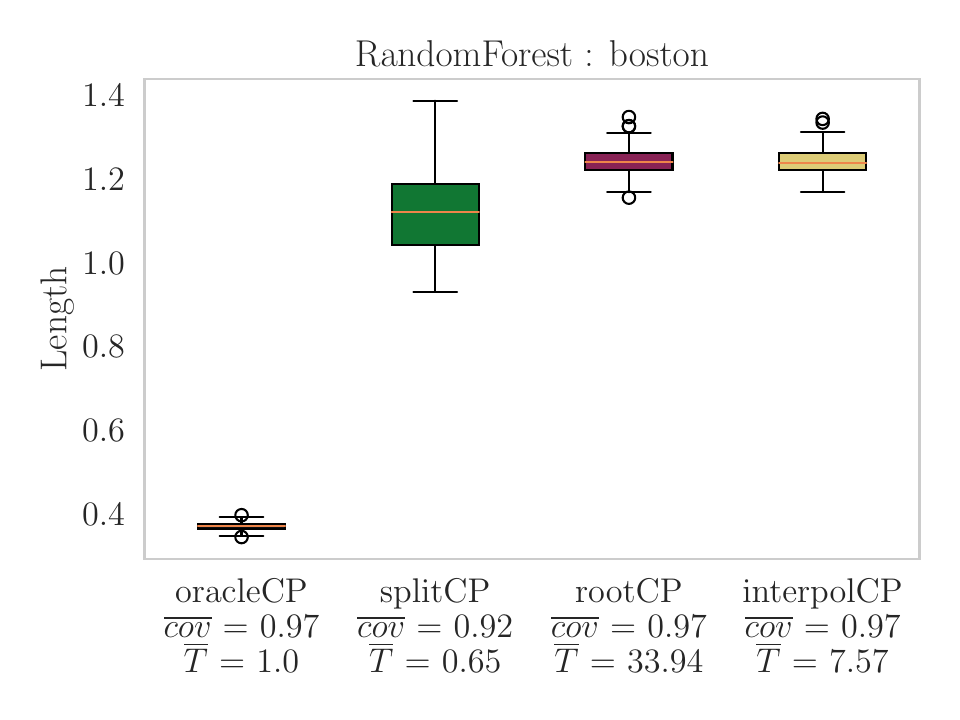}}
\subfigure{\includegraphics[width=0.49\textwidth]{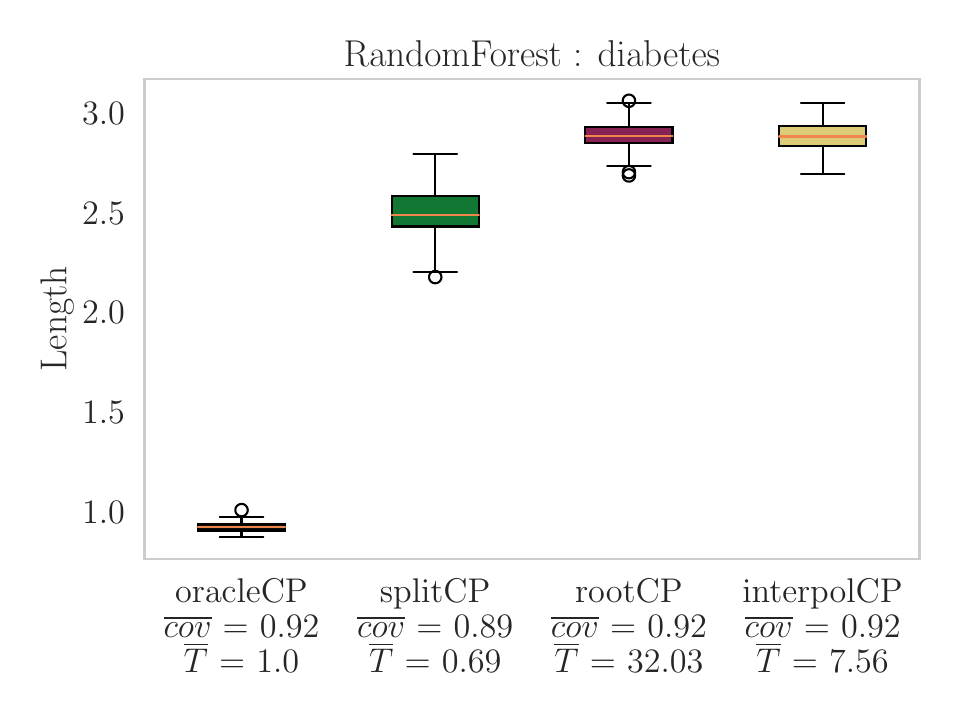}}
\subfigure{\includegraphics[width=0.49\textwidth]{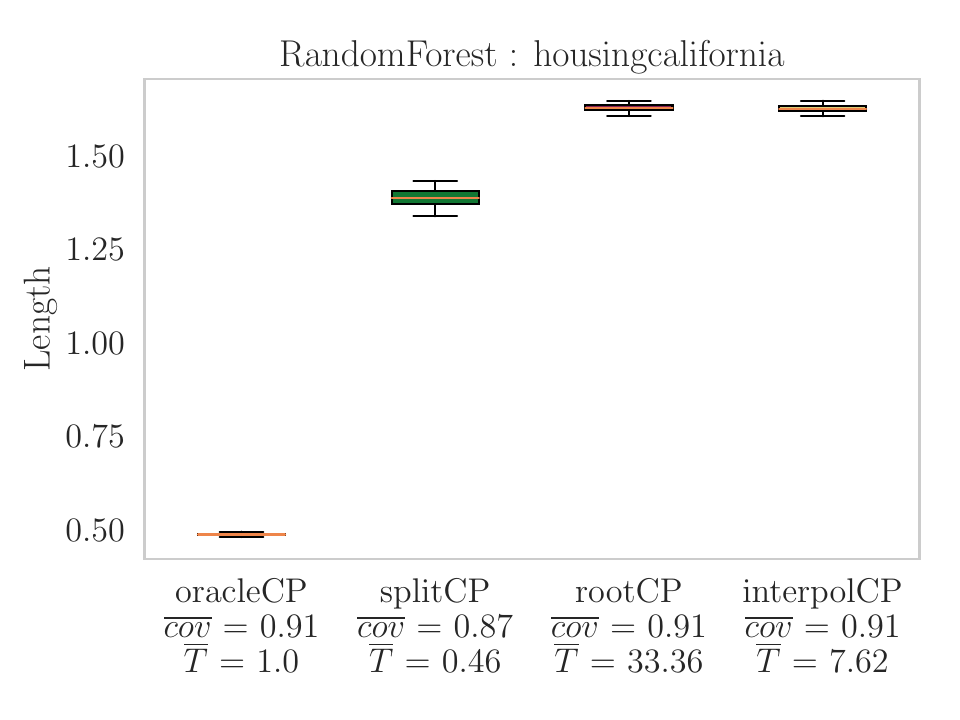}}
\subfigure{\includegraphics[width=0.49\textwidth]{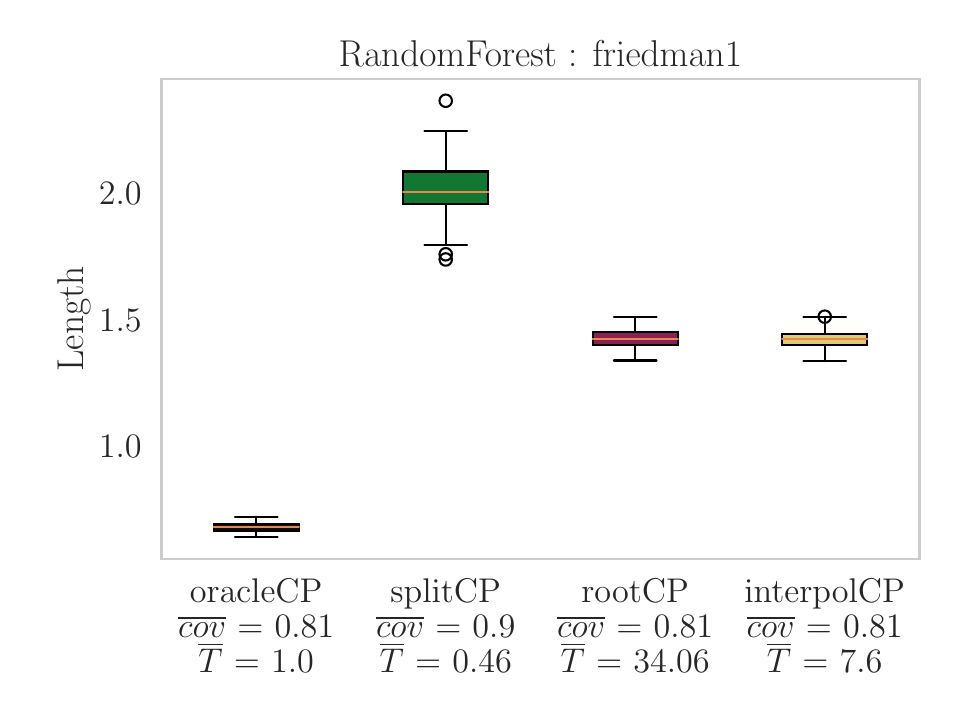}}
\caption{Benchmarking conformal sets for Random Forest regression models on real datasets. We display the lengths of the confidence sets over $100$ random permutation of the data. We denoted $\overline{cov}$ the average coverage, and $\overline{T}$ the average computational time normalized with the average time for computing \texttt{oracleCP} which requires a single model fit on the whole data. \label{fig:RandomForest_benchmarks} \looseness=-1}
\end{figure}
\begin{figure}
\subfigure{\includegraphics[width=0.49\textwidth]{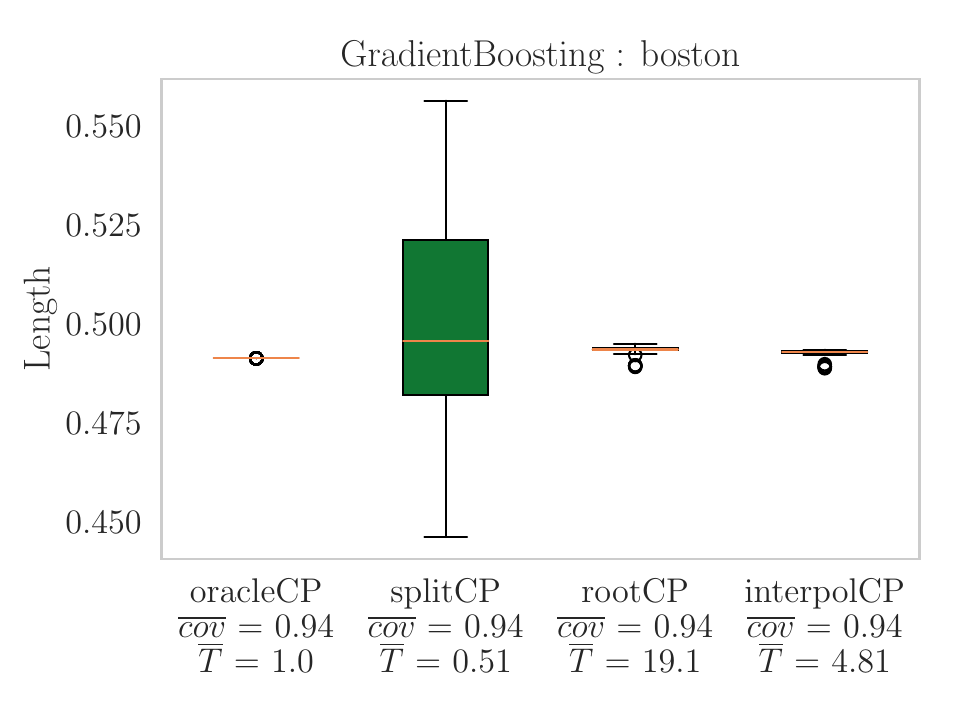}}
\subfigure{\includegraphics[width=0.49\textwidth]{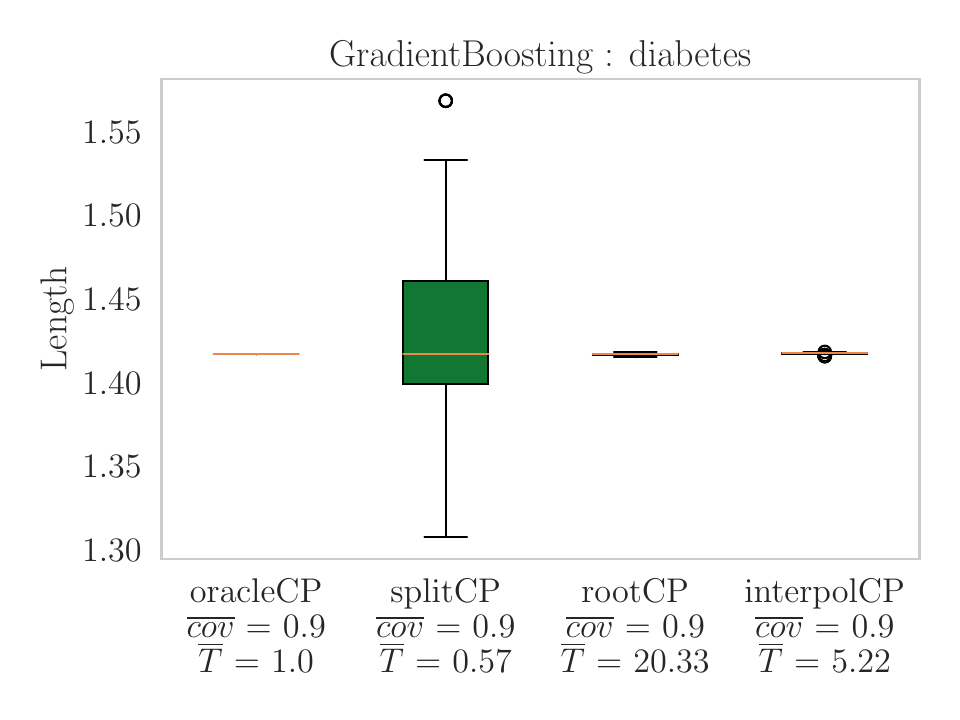}}
\subfigure{\includegraphics[width=0.49\textwidth]{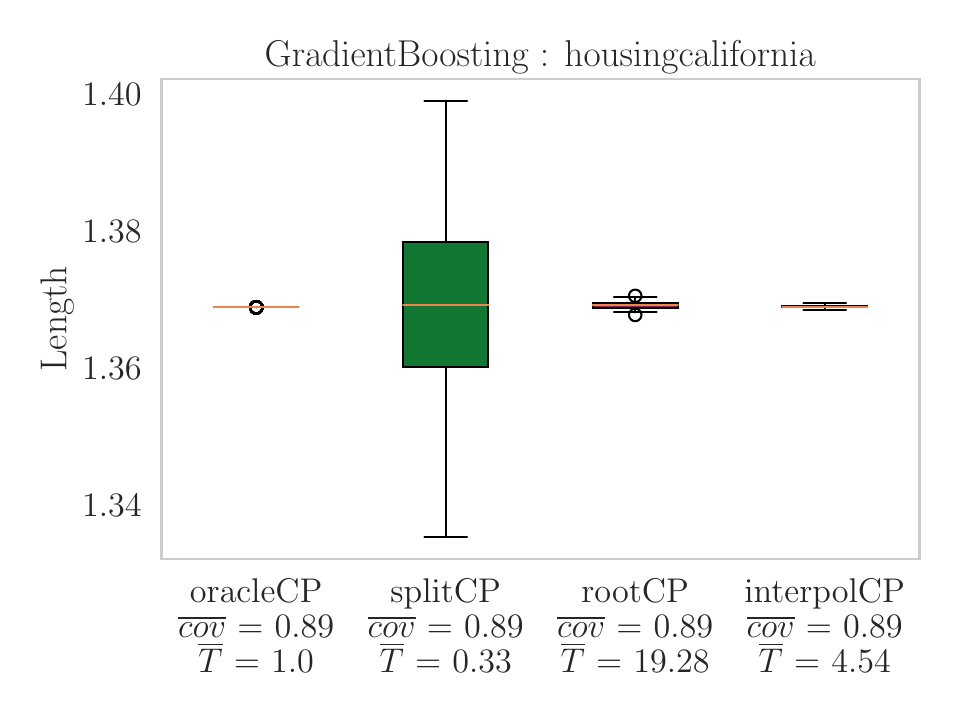}}
\subfigure{\includegraphics[width=0.49\textwidth]{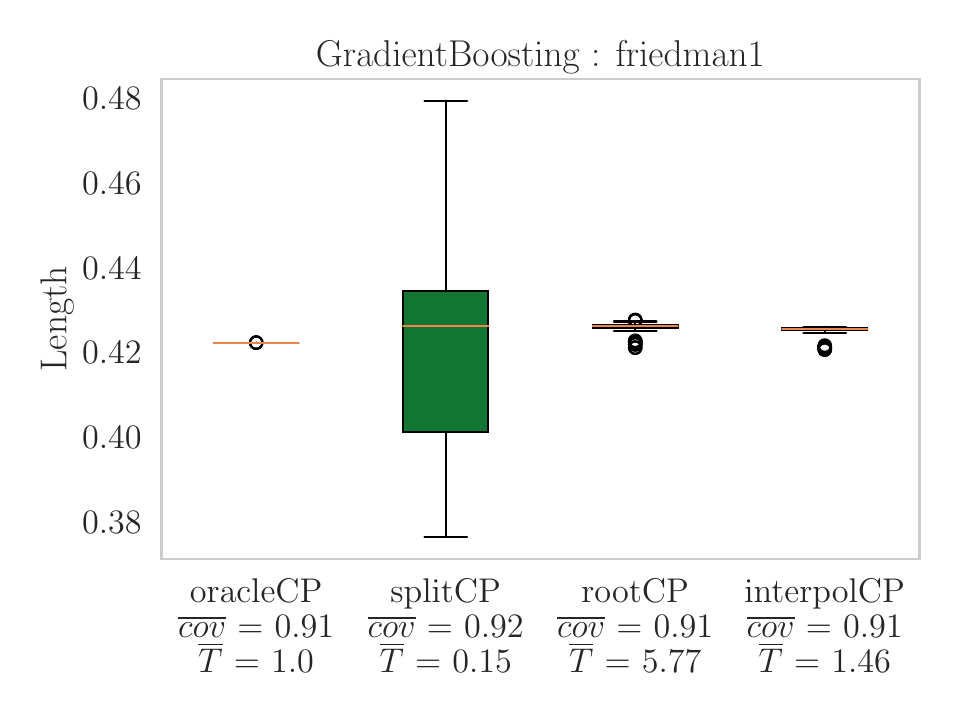}}
\caption{Benchmarking conformal sets for Gradient Boosting regression models on real datasets. We display the lengths of the confidence sets over $100$ random permutation of the data. We denoted $\overline{cov}$ the average coverage, and $\overline{T}$ the average computational time normalized with the average time for computing \texttt{oracleCP} which requires a single model fit on the whole data.\looseness=-1\label{fig:GradientBoosting_benchmarks}}
\end{figure}
 
We numerically examine the performance of the root-finding methods to compute various conformal prediction sets for regression problems on both synthetic, and real databases. \eug{We summarize the datasets in \Cref{tab:datasets}}.
 
\begin{table}[H]
\begin{center}
\begin{tabular}{cccccc}
\multicolumn{1}{l}{} & \multicolumn{1}{l}{Boston} & \multicolumn{1}{l}{Diabetes} & \multicolumn{1}{l}{Housingcalifornia} & \multicolumn{1}{l}{Friedman1} & \multicolumn{1}{l}{Climate} \\ \hline
$n$                  & $506$                      & $442$                        & $20640$                               & $500$                         & $814$                       \\ \hline
$p$                  & $13$                       & $10$                         & $8$                                   & $100$                         & $73570$                    
\end{tabular}
\caption{\eug{The first four datasets used in our experiments are available in \texttt{sklearn}. Here, $n$ is the number of samples, and $p$ is the number of features. The noise level in the Friedman dataset is fixed to $1$. The climate dataset is from NCEP/NCAR Reanalysis \citep{Kalnay_Kanamitsu_Kistler_Collins_Deaven_Gandin_Iredell_Saha_White_Woollen_Others96}}.\looseness=-1}
\label{tab:datasets}
\end{center}
\end{table}
 
The experiments were conducted with a coverage level of $0.9$\ie $\alpha = 0.1$.
For comparisons, we run the evaluations on $100$ repetitions of examples, and display the average of the following performance statistics for different methods: 1) the empirical coverage\ie the percentage of times the prediction set contains the held-out target $y_{n+1}$; 2) the length of the confidence intervals; 3) the execution time. For each run, we randomly select a couple of input/output $(x_i, y_i)$ to constitute the targeted variables for which we will compute the conformal prediction set, and the rest is considered as observed data $\Data_n$. Similar experimental setting was considered in \citep{Lei19}. \\
 
From \Cref{lm:empirical_repartition_lemma}, we have $\pi(y_{n+1}) \geq \alpha$ with probability larger than $1 - \alpha$. Whence one can define the \texttt{OracleCP} as $\pi^{-1}([\alpha, +\infty))$ where $\pi$ is obtained with a model fit optimized on the oracle data $\Data_{n+1}(y_{n+1})$. In the case where the conformity function is the absolute value, we obtain the reference prediction set as in \citep{Ndiaye_Takeuchi19}
$$\texttt{oracleCP: } [\mu_{y_{n+1}}(x_{n+1}) \;\pm\; Q_{1 - \alpha}(y_{n+1})] \enspace.$$
We remind that the target variable $y_{n+1}$ is not available in practice.

In the case of Ridge regression, \emph{exact} conformal prediction sets can be computed by homotopy without data splitting, and without additional assumptions \citep{Nouretdinov_Melluish_Vovk01}. This allows us to finely assess the precisions of the proposed approaches, and illustrate the speed up benefit in \Cref{fig:benchmark_ridge}.\\
 
\eug{We also illustrate the performance of our approach compared to the approximate homotopy method for the Lasso problem on a real data set from climate measurements. We first note that \texttt{splitCP} has a strictly, and significantly larger confidence set while the other approaches are quite close to the oracle performance. The approximate homotopy method uses all the data, and does not lose statistical efficiency if the model tolerance error is moderately small. However, as already noted in \citep{Ndiaye_Takeuchi19}, it becomes unusable when the accuracy of the optimization becomes low. This is because its complexity depends directly on the accuracy of the model optimization, see the discussion in \Cref{subsec:Approximation_to_a_Prescribed_Accuracy}. We have shown that this does not affect our method because the number of model fits does not increase as the optimization error decreases. In particular, we can observe in \Cref{tab:climate_lasso_bench} that \texttt{rootCP} is two to fifteen times faster when the tolerance errors range from $10^{-2}$ to $10^{-6}$. This is mainly because the complexity of the root finding approach, i.e., the number of times it calls the model, is independent of the optimization error of the underlying model fit. Whence, it allows the use of highly accurate estimators while maintaining feasible computations.}\\
 
\begin{table*}
\begin{center}
\begin{tabular}{llllll}
         &                   &                  & \multicolumn{3}{c}{\texttt{Approximate homotopy / rootCP}}                                          \\ \cline{4-6}
         & \texttt{oracleCP} & \texttt{splitCP} & \multicolumn{1}{c|}{1e-2}          & \multicolumn{1}{c|}{1e-4}           & \multicolumn{1}{c}{1e-6} \\
Coverage & 0.94              & 0.90             & \multicolumn{1}{l|}{0.95 / 0.93}   & \multicolumn{1}{l|}{0.94 / 0.94}    & 0.94 / 0.94              \\
Length   & 0.693             & 1.054            & \multicolumn{1}{l|}{0.747 / 0.699} & \multicolumn{1}{l|}{0.711 / 0.708}  & 0.707 / 0.707            \\
Time (s)     & 3.088             & 1.409            & \multicolumn{1}{l|}{4.289 / 2.478} & \multicolumn{1}{l|}{29.987 / 9.173} & 316.67 / 20.92          
\end{tabular}
\caption{Computing a conformal set for a Lasso regression problem on a climate data set with $n= 814$ observations, and $p= 73570$ features. We display the coverage, length, and execution times for different methods averaged over $100$ randomly held-out validation data sets. \label{tab:climate_lasso_bench}}
\end{center}
\end{table*}
 
We run experiments on more complex regression models such as ridge in \Cref{fig:ridge_benchmarks}, Lasso in \Cref{fig:Lasso_benchmarks}, Orthogonal Matching Pursuit (OMP) in \Cref{fig:OMP_benchmarks}, feedforward neural network also called Multi-Layer Perceptron (MLP) in \Cref{fig:MLP_benchmarks}, Random Forest in \Cref{fig:RandomForest_benchmarks}, and Gradient Boosting in \Cref{fig:GradientBoosting_benchmarks} (with warm start). In most of these settings, the estimator is obtained by approximating a solution of a non-convex optimization problem where none of the homotopy methods are available. We can observe in Figures~\ref{fig:ridge_benchmarks} to~\ref{fig:RandomForest_benchmarks} that the root-finding approach computes a full conformal prediction set while maintaining a reasonable computational time. In the worst cases observed, it costs about $30$ times a single model fit which roughly corresponds to the $15$ model fit for each root as predicted by the complexity (\egc $\epsilon=10^{-4}$). Moreover, this computational time is significantly reduced by the \texttt{interpolationCP} while achieving a statistical performance almost identical to the vanilla one in all our simulations. In all our experiments, we have chosen $d=8$ number of query points for the interpolation method.

\section{Conclusion}
Since its introduction, the computation of a confidence region with full conformal prediction methods has been a major weakness to its adoption by a broader audience. The algorithms available until now were based on too strong assumptions that limited them to estimators whose map $z \mapsto \mu_z(\cdot)$ can be traced\eg by using homotopy methods. We have shown that the limitations of the previous methods can be overcome by directly estimating the endpoints of the $\alpha$-level set of the typicalness function with a root finding algorithm. Therefore, it is unnecessary to train the regression estimator an infinite number of times nor to make strong additional assumptions on the prediction model. As long as the conformal set represents an interval containing the point prediction obtained from the observed data, it can easily be estimated with only about ten numbers of model fits. The proposed approach can be readily applied to recent generalizations of the conformal prediction set beyond the exchangeability assumption as in \citep{Chernozhukov_Wuthrich_Zhu18, Chernozhukov_Wuthrich_Zhu21}.\looseness=-1\\
 
Nevertheless, we insist that a full conformal prediction set is not always an interval. In this case, our approach fails without properly bracketing these intervals. Both, testing the interval assumption or finding a proper bracketing are still difficult. Another severe, and silent disadvantage is that the conformal set itself might be ill-defined when applied to regression estimators that depend on solving a non-convex optimization problem or a stochastic scheme (\egc stochastic gradient descent). Indeed, in these cases, given a fixed candidate $z$, $\pi(z)$ can take multiple values depending on the initialization or the random seed. Moreover, the symmetry assumptions might be violated if the instances are not used evenly (\egc when using stochastic gradient descent with importance sampling). As a future work, it would therefore be interesting to understand how these points can negatively affect the coverage guarantee, and the computational complexity.

\section*{Declarations}

\paragraph{Funding.}
IT was partially supported by MEXT KAKENHI (20H00601), JST CREST (JPMJCR21D3), JST Moonshot R\&D (JPMJMS2033-05), JST AIP Acceleration Research (JPMJCR21U2), NEDO (JPNP18002, JPNP20006), and RIKEN Center for Advanced Intelligence Project.

\paragraph{Conflicts of interest/Competing interests.}
The authors declare that they have no conflicts of interest.

\paragraph{Availability of data, and material.} All the data, and software used are available in open source.

\paragraph{Code availability.} The source of our implementation is available at 
\begin{center}
\url{https://github.com/EugeneNdiaye/rootCP}
\end{center}

\paragraph{Ethics approval.}
Not Applicable.

\paragraph{Consent to participate.}
All authors agreed to participate as authors of this article.

\paragraph{Consent for publication.} 
All authors consent for submission, and publication.

\paragraph{Authors' contributions.}
EN introduced, and formalized the contributions of the paper. He wrote, and analyzed the technical details, implemented the algorithms, designed, and performed the numerical simulations. IT contributed to the conceptualization, formulation, and design of the numerical experiments in this study.

\bibliography{references}
\bibliographystyle{authordate1}

\end{document}